\documentclass{article}
\pdfoutput=1

\usepackage{arxiv}
\usepackage[utf8]{inputenc}
\usepackage[T1]{fontenc}
\usepackage{microtype}
\usepackage{graphicx}
\usepackage{booktabs}
\usepackage{array}
\usepackage{tabularx}
\usepackage{multirow}
\usepackage{amsmath,amssymb,amsthm}
\usepackage{siunitx}
\usepackage{xcolor}
\usepackage{placeins}
\usepackage{float}
\usepackage{xspace}
\usepackage[hidelinks]{hyperref}
\usepackage[nameinlink,noabbrev]{cleveref}

\hypersetup{
  pdftitle={SafeRestore: Detector-Relative Risk Certificates for Selective Industrial Image Restoration},
  pdfauthor={Shaoliang Yang and Jun Wang},
  pdfkeywords={selective image restoration, exact binomial certification, industrial inspection, abstention, uncertainty quantification}
}

\graphicspath{{figures/}}
\newcommand{\method}{SafeRestore\xspace}
\newcommand{\lossrisk}{R_{\mathrm{loss}}}
\newcommand{\actrisk}{R_{\mathrm{act}}}
\newcommand{\nLoss}{n_{\mathrm{loss}}}
\newcommand{\nAct}{n_{\mathrm{act}}}
\newcommand{\eLoss}{e_{\mathrm{loss}}}
\newcommand{\eAct}{e_{\mathrm{act}}}
\newcommand{\ucbLoss}{U_{\mathrm{loss}}}
\newcommand{\ucbAct}{U_{\mathrm{act}}}
\newcommand{\ind}{\mathbf{1}}
\newtheorem{proposition}{Proposition}

\title{\method: Detector-Relative Risk Certificates for\\
Selective Industrial Image Restoration}
\author{%
Shaoliang Yang\\
Santa Clara University\\
Santa Clara, CA 95053, USA\\
\texttt{syang11@scu.edu}
\And
Jun Wang\thanks{Corresponding author: \texttt{jwang22@scu.edu}.}\\
Santa Clara University\\
Santa Clara, CA 95053, USA\\
\texttt{jwang22@scu.edu}
}
\date{}
\renewcommand{\headeright}{Preprint}
\renewcommand{\undertitle}{Preprint}
\renewcommand{\shorttitle}{SafeRestore: Detector-Relative Risk Certificates}

\begin{document}
\maketitle

\begin{abstract}
Industrial inspection pipelines often restore a measured image before a
detector acts on it, yet restoration can suppress detector-supported defect
structure or create clean-region activations.  We formulate restoration as a
selective action problem over the measured display, five restored candidates,
and review.  \method ranks candidates with action-specific fitted scores,
chooses a gate on threshold-tuning data, and evaluates the fixed gate on a
disjoint certification sample with two one-sided exact binomial bounds: one for
the positive-conditional evidence-loss incident rate and one for the
all-accepted excess-activation incident rate.  The guarantee is marginal for one
policy fixed before its certification outcomes are observed, under an
image-level i.i.d.\ working model.  In a retrospective split-sample study of
4,591 public Carinthia-S images, the protocol yields auditable risk--coverage
behavior.  The primary all-action policy passes in one of five training repetitions
(\(12.0\%\pm26.9\%\) pass-gated test coverage when failures count as zero),
whereas fixed bicubic and reduced-complexity variants pass more often.  On
reserved morphologies, evidence-loss incidence rises to 81.1--90.3\%, and
KolektorSDD lacks both detector competence and enough positive certification
images for the stated target.  The contribution is therefore an auditable,
detector-relative framework for deciding when a transformed image may be
returned automatically and when review remains necessary---not a claim that
adaptive routing outperforms simpler policies on the present evidence.
\end{abstract}

\keywords{selective image restoration \and exact binomial certification
\and industrial inspection \and defect segmentation \and abstention
\and uncertainty quantification}

\section{Introduction}
\label{sec:introduction}

Image restoration changes the observation on which an inspection decision is
made.  A transformation that improves visual sharpness or average
reconstruction fidelity can still suppress a small defect, and a learned prior
can introduce structure that activates a downstream detector
\cite{belthangady2019microscopy,weigert2018care,kim2026hallugen}.  These two
failures have different operational consequences and are poorly summarized by
PSNR or SSIM when the relevant structure occupies a small fraction of the image.

In an inspection pipeline, restoration is therefore a \emph{decision} problem
rather than only a reconstruction problem.  For each observation, the pipeline
may retain the measured display, return one of several restored candidates, or
assign the case to review.  The useful question is not only which candidate has
the best mean quality, but whether a fixed policy has enough held-out evidence
to return any candidate automatically while keeping two distinct incident rates
below stated limits.  Task-oriented restoration addresses downstream utility
\cite{haris2021task_driven_sr,kim2024sr4ir,chen2025unirestore}; selective
prediction and risk-control methods address abstention and finite-sample
calibration \cite{geifman2017selective,bates2021rcps,angelopoulos2021ltt}.
\method joins these perspectives at the level of the returned image action.

\method is a detector-relative selective-restoration protocol.  Its action set
contains the raw display and five fixed or learned restorations.  For each
action, class-balanced logistic models produce two fitted ranking scores from
detector and consistency features.  One score ranks
\emph{evidence-loss} risk---detector recall falling below a specified floor on a
defect-positive image.  The other ranks \emph{excess-activation}
risk---clean-region detector activation exceeding a specified threshold.  The
scores select an action and order images for gating; they are ranking scores,
not calibrated probabilities.  A gate is chosen on one data role and evaluated
once on a disjoint certification role using separate one-sided exact binomial
bounds (\Cref{fig:concept}).

The risk populations matter.  Evidence loss is meaningful only on
defect-positive images, whereas excess activation is defined on the clean
region of every accepted image.  Pooling clean images into the evidence-loss
denominator would make apparent performance depend mechanically on defect
prevalence.  \method therefore records different denominators for the two
endpoints, reports positive and clean coverage separately, and supplies a
prevalence-standardized expression for excess-activation risk when the intended
operating mixture differs from the study mixture.

\begin{figure}[!htb]
  \centering
  \includegraphics[width=\linewidth]{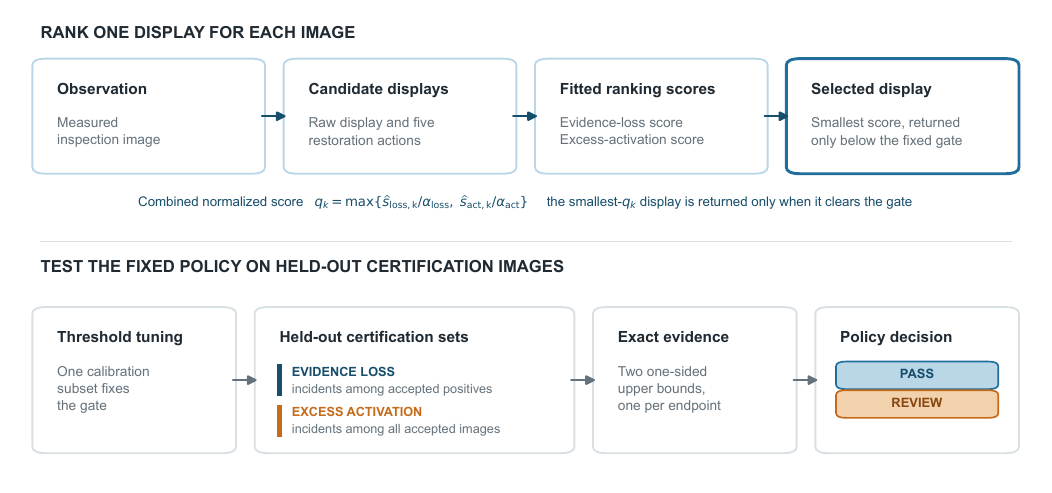}
  \caption{\method separates action ranking from policy certification.  The raw
  display and restored candidates receive two fitted ranking scores (not
  calibrated probabilities).  Their worst normalized score selects the displayed
  candidate and defines a gate.  A held-out certification sample supplies exact
  bounds for evidence loss among accepted positives and excess activation among
  all accepted images.  The policy passes only when both upper bounds are at or
  below their targets; a passing fixed policy then returns its selected action,
  and all other cases are assigned to review.}
  \label{fig:concept}
\end{figure}

The statistical statement is deliberately narrow.  For one policy fixed before
its certification outcomes are observed, an image-level i.i.d.\ working model and
a Bonferroni allocation across the two endpoints yield a finite-sample bound on
false certification.  The statement is marginal for that policy.  It does not
cover choosing whichever detector/restorer seed, action pool, or ablation
happens to pass.  The empirical study therefore reports five marginal
certificates as a sensitivity analysis and does not select a passing seed for
deployment.

The contribution is a protocol, not a new restoration network or generic
conformal theorem.  Compact restorers and detectors are used as reproducible
testbeds for the decision layer.  On a retrospective split of 4,591 public
Carinthia-S SEM image--mask pairs \cite{carinthias2025}, the primary all-action policy
passes in only one of five training repetitions, while fixed bicubic and several
simpler variants pass more often.  Certification therefore exposes training
sensitivity and withholds automatic return when evidence is insufficient.
Reserved morphologies and KolektorSDD further show where the nominal certificate
does not transfer.  Relationship to our earlier study of whether
super-resolution preserves defect evidence \cite{yang2026waferinspectsr}
is summarized once in \Cref{tab:prior-comparison}; the present estimand and
records are new, but the Carinthia-S identity inventory is shared.

\begin{samepage}
The contributions are:
\begin{enumerate}
  \item a detector-relative formulation of selective restoration with separate
  evidence-loss and excess-activation populations, component coverages, and an
  explicit prevalence-standardized activation risk;
  \item a reproducible rank--tune--certify procedure with a finite-sample
  false-certification statement for one fixed policy, an explicit boundary on
  cross-policy selection, and record-level evidence sufficient to reconstruct
  every certificate; and
  \item a five-repetition public-data study that reports risk--coverage
  behavior, compares every fixed action with adaptive pools, foregrounds the
  strongest simplifications, and diagnoses failure under stronger endpoints,
  limited certification evidence, reserved morphologies, and a second public
  dataset---and shows that the observed pass/fail pattern is unchanged under
  the examined alternative bound constructions.
\end{enumerate}
\end{samepage}

\section{Related Work}
\label{sec:related}

\subsection{Restoration evaluated through downstream evidence}

Modern restoration systems span convolutional, residual, adversarial, attention,
and transformer designs
\cite{dong2014srcnn,lim2017edsr,zhang2018rcan,ledig2017srgan,wang2021realesrgan,liang2021swinir,chen2022nafnet}.
Scientific-imaging studies and hallucination benchmarks distinguish plausible
appearance from measured structure
\cite{belthangady2019microscopy,weigert2018care,kim2026hallugen}.
Task-oriented restoration optimizes a downstream objective rather than visual
quality alone
\cite{haris2021task_driven_sr,kim2024sr4ir,chen2025unirestore}.  RL-Restore and
Path-Restore learn image-dependent restoration programs or network paths
\cite{yu2018rlrestore,yu2021pathrestore}.  These methods principally optimize
expected task performance, reconstruction quality, or computation.  \method
starts after the candidate transforms and downstream detector are trained.  Its
object is the selective policy that decides which image, if any, is returned
without review under two detector-relative incident limits.

\subsection{Uncertainty, selective prediction, and routing}

Distribution-free image-to-image uncertainty methods construct pixelwise
intervals for inverse problems \cite{angelopoulos2022image}; task-driven methods
propagate uncertainty to downstream functionals \cite{wen2024taskuq}; and recent
super-resolution work provides conformal confidence masks or reconstruction
sets \cite{adame2025sr_guarantees}; Bayesian alternatives instead estimate
spectral uncertainty for super-resolution \cite{liu2023spectral}.  Industrial applications include conformal
risk control for defect segmentation
\cite{shen2025conformal_defect} and calibrated pixelwise intervals for
layout-to-SEM reconstruction \cite{chien2025layout_sem_conformal}.  These works
quantify uncertainty around an image or task output.  The present endpoint is
instead the incident rate of a discrete raw/restored/review policy.

Selective prediction supplies the reject-option perspective
\cite{geifman2017selective,geifman2019selectivenet}.  Learning-to-defer methods
optimize accuracy--cost tradeoffs when passing decisions to an expert
\cite{mozannar2020defer,narasimhan2022defer}.  Risk-controlling prediction sets,
Learn-then-Test, and conformal risk control provide general finite-sample tools
for black-box losses
\cite{bates2021rcps,angelopoulos2021ltt,angelopoulos2024crc}.  Selective and joint
extensions address selection-conditioned or simultaneous quantities
\cite{xu2025scrc,yu2026joint,score2026}.

LEC is especially close conceptually: it formulates selection-conditioned risk
as a linear expectation constraint, computes a retention-maximizing threshold
from held-out data, and extends the construction to two-model routing
\cite{wang2026lec}.  RACER likewise combines finite-sample calibration with
model routing and abstention for set-valued large-language-model routing
\cite{hao2026racer}.  \method differs in the scientific endpoint, not in
claiming a more general theorem.  It selects among image transformations and
controls two incident rates with different conditioning populations.  The
implementation uses a conservative split: tuning chooses one gate, and an
independent certification role evaluates that fixed gate with two exact
binomial bounds and a Bonferroni allocation.  This construction is motivated by
risk-control principles but is not the RCPS or Learn-then-Test algorithm.

\subsection{Industrial inspection setting}

Public defect datasets make reproducible evaluation possible without
proprietary fabrication imagery.  MVTec AD and KolektorSDD contain clean and
defective surface images \cite{bergmann2019mvtec,tabernik2019kolektor}, whereas
Carinthia-S provides SEM image--mask pairs across six morphology classes
\cite{carinthias2025}.  Semiconductor-imaging work increasingly studies learned
enhancement, hotspot imaging, segmentation under noisy and data-limited
acquisition, and transfer under manufacturing constraints
\cite{sun2025confocal_sr,srfabnet2025,kim2024tsm_hotspot,chu2025tsm_transfer,jo2024tsm_tem}.
Our controlled transformations isolate the action decision; they are not a
physical qualification of a scanner, recipe, or fabrication process.

\subsection{Relationship to a prior Carinthia-S study}

The closest empirical neighbor is our earlier study of whether
super-resolution preserves defect evidence at a low false-call operating
point \cite{yang2026waferinspectsr}.  It shares authorship, several transformations,
and all 4,591 Carinthia-S image--mask identities.  The overlap is complete and
must be considered when judging empirical novelty.  \Cref{tab:prior-comparison}
separates the shared substrate from the new estimand and evidence.  No fitted
model, detector operating point, split, synthetic record, or numerical result
from that study enters score fitting, threshold selection, or
certification here.  Conversely, the present work does not reuse the shared
images as evidence of external-domain replication.

The two studies also reach compatible conclusions by different routes, and that
relationship should be read carefully.  The earlier study compared
\emph{fixed} reconstruction methods at a predeclared low-false-call operating
point and found that the highest-fidelity learned reconstructions recovered
fewer defect pixels than bicubic interpolation.  The present study does not
re-test that comparison.  It asks whether an \emph{image-specific} policy over
comparable transformations can be certified for automatic return, and it uses
different models---a compact U-Net detector and residual restorer here, rather
than the DeepLabV3 detector and jointly trained reconstruction model there---so
no operating point, threshold, or fidelity ranking is inherited.  This is not a
replication of the earlier ordering on new images.  Within this separate
retrospective study, allowing per-image adaptivity and abstention did not produce
more passing marginal certificates than simple fixed interpolation.

\begin{table}[!htb]
\centering
\caption{Provenance and claim boundary relative to
\cite{yang2026waferinspectsr}.  Carinthia-S identity overlap is complete;
novelty rests on the selective-policy estimand, disjoint certification role,
and corresponding records rather than on new public images.}
\label{tab:prior-comparison}
\footnotesize
\renewcommand{\arraystretch}{1.14}
\begin{tabularx}{\linewidth}{@{}>{\raggedright\arraybackslash}p{0.18\linewidth}>{\raggedright\arraybackslash}X>{\raggedright\arraybackslash}X@{}}
\toprule
Dimension & Prior study & This study \\
\midrule
Scientific question & Does each fixed super-resolution method preserve sparse
defect evidence at a low false-call operating point? & Does one fixed
image-specific raw/restored/review policy satisfy two stated incident limits on
its accepted population? \\
Primary evidence & Ten repetitions on independently generated line/space and
contact-hole structures; Carinthia-S is an unchanged-policy external stress. &
A retrospective Carinthia-S train/validation/tune/certify/test partition;
reserved morphologies and KolektorSDD are descriptive boundary checks. \\
Decision unit & A method--repetition pair is compared with other fixed methods.
& Each image receives one selected action or review from a fixed policy. \\
Statistical claim & Paired preservation and operating-point comparisons; no
image-specific selective-risk certificate. & Separate exact marginal bounds for
evidence loss among accepted positives and excess activation among all accepted
images. \\
Headline finding & Reconstruction fidelity and inspection utility diverge:
learned reconstruction attains the highest similarity yet detects fewer defect
pixels than bicubic interpolation. & In these five repetitions, per-image action selection with abstention passes
fewer marginal certificates than fixed bicubic and several simplifications;
training sensitivity and positive-evidence volume constrain the result. \\
Models & DeepLabV3 detector and a jointly trained reconstruction model. &
Compact U-Net detector and residual restorer; no operating point or checkpoint
is inherited from that study. \\
Shared material & Complete Carinthia-S identity inventory and several
transformation families. & The same identities and related transformations;
this paper does not present Carinthia-S as a new acquisition bank. \\
Distinct artifacts & Records, detectors, and operating points from that study.
& New split manifests, trained checkpoints, action-level score records,
tuned gates, certification counts, ablations, and policy outputs. \\
\bottomrule
\end{tabularx}
\end{table}

\section{Decision Problem and Detector-Relative Risks}
\label{sec:problem}

Let \(Y\in[0,1]^{H\times W}\) be a reference inspection image with binary
defect mask \(M\).  A controlled observation is
\begin{equation}
  X=\mathcal{D}_{s}(Y;\epsilon),
  \label{eq:degradation}
\end{equation}
where \(s\) indexes degradation severity.  The policy receives \(X\); the
reference and mask are used only in their designated offline data roles.
Equation~\eqref{eq:degradation} creates a matched counterfactual restoration
test, not a second physical acquisition.  One image is the statistical unit.
Because the public manifest contains no wafer, lot, or specimen-group
identifiers, physical independence cannot be verified; the certificate is
stated under an image-level i.i.d. working model.

The non-review action set is
\begin{equation}
  \mathcal{A}_0(X)=
  \{a_{\mathrm{raw}},a_{\mathrm{bil}},a_{\mathrm{bic}},
  a_{\mathrm{smooth}},a_{\mathrm{sharp}},a_{\mathrm{learned}}\}.
  \label{eq:action-set}
\end{equation}
The raw action displays \(X\) by nearest-neighbor resizing; the other actions
are fixed or learned transformations.  A detector \(f\), fixed within each
training repetition, maps every candidate to a pixelwise score map.  Review,
\(a_{\mathrm{review}}\), is the terminal action when the selected candidate is
not returned automatically.

\subsection{Image-level incidents}

At detector threshold \(\tau\), define defect-pixel recall and clean-region
false-positive rate for action \(a\) as
\begin{align}
 \operatorname{Rec}(a) &=
 \frac{\sum_u M_u\,\ind\{f(a(X))_u\ge\tau\}}
 {\sum_u M_u}, \qquad \sum_uM_u>0, \\
 \operatorname{FPR}_{c}(a) &=
 \frac{\sum_u(1-M_u)\,\ind\{f(a(X))_u\ge\tau\}}
 {\sum_u(1-M_u)}.
 \label{eq:image-metrics}
\end{align}
The two incident indicators are
\begin{align}
 \ell_{\mathrm{loss}}(a) &=
 \ind\{\operatorname{Rec}(a)<r_0\},
 &&\text{defined only when }M\ne0,\\
 \ell_{\mathrm{act}}(a) &=
 \ind\{\operatorname{FPR}_{c}(a)>f_0\},
 &&\text{defined for every image.}
 \label{eq:incident-losses}
\end{align}
The primary values are \(r_0=0.25\) and \(f_0=0.002\).  The first endpoint
marks severe loss of detector-supported evidence; it is not a claim that 25\%
recall is adequate segmentation.  A 50\% recall floor is analyzed separately.
The second endpoint marks excessive detector activation outside annotated
defects.

Both endpoints are detector-relative surrogates.  An evidence-loss incident can
reflect weak detector competence as well as suppression by a transformation,
and it can occur for the raw display.  An excess-activation incident is not
proof that a restoration invented physical structure; it records detector calls
on annotated clean pixels, including calls already present in the raw display.
Consequently, the paper uses the terms \emph{evidence loss} and \emph{excess
activation}, not causal claims of deletion or invention.

Let \(a^\star(X)\in\mathcal A_0(X)\) be the candidate selected before
certification, and let
\(G_\theta(X)=\ind\{q^\star(X)\le\theta\}\) be its fixed return gate.  The
policy risks are
\begin{align}
 \lossrisk(\theta)
 &=\Pr\!\left\{\ell_{\mathrm{loss}}(a^\star)=1
   \mid G_\theta=1,\ M\ne0\right\},\\
 \actrisk(\theta)
 &=\Pr\!\left\{\ell_{\mathrm{act}}(a^\star)=1
   \mid G_\theta=1\right\}.
 \label{eq:population-risks}
\end{align}
Conditioning evidence loss on \(M\ne0\) prevents accepted clean images from
artificially lowering that risk.  Excess activation uses all accepted images
because every image contains a clean region.

\subsection{Coverage and operating prevalence}

Define positive and clean return coverage as
\begin{align}
 C_+(\theta)&=\Pr\{G_\theta=1\mid M\ne0\},\\
 C_0(\theta)&=\Pr\{G_\theta=1\mid M=0\}.
 \label{eq:population-coverage}
\end{align}
For an operating defect prevalence \(\pi\), overall return coverage is
\begin{equation}
 C_\pi(\theta)=\pi C_+(\theta)+(1-\pi)C_0(\theta).
 \label{eq:standardized-coverage}
\end{equation}
Because the all-accepted activation risk changes with the accepted population
mixture, define the component risks
\begin{align}
 R_{\mathrm{act},+}(\theta)
 &=\Pr\{\ell_{\mathrm{act}}=1\mid G_\theta=1,M\ne0\},\\
 R_{\mathrm{act},0}(\theta)
 &=\Pr\{\ell_{\mathrm{act}}=1\mid G_\theta=1,M=0\}.
\end{align}
When \(C_\pi(\theta)>0\), the prevalence-standardized activation risk is
\begin{equation}
 R_{\mathrm{act},\pi}(\theta)=
 \frac{\pi C_+(\theta)R_{\mathrm{act},+}(\theta)
 +(1-\pi)C_0(\theta)R_{\mathrm{act},0}(\theta)}
 {C_\pi(\theta)}.
 \label{eq:standardized-activation}
\end{equation}
Equations~\eqref{eq:standardized-coverage}--\eqref{eq:standardized-activation}
separate a change in workload mixture from a change in component behavior.  A
component risk is undefined if its conditioning event has zero probability; an
empirical certificate with no accepted observations for a required endpoint is
set to fail rather than treating the missing rate as zero.

The study targets are
\begin{equation}
  \lossrisk(\theta)\le\alpha_{\mathrm{loss}}=0.15,
  \qquad
  \actrisk(\theta)\le\alpha_{\mathrm{act}}=0.15.
  \label{eq:targets}
\end{equation}
These are transparent research thresholds, not manufacturing acceptance limits.
A deployment study must elicit endpoint definitions and tolerances from the
costs of missed evidence, false detector activity, and review.  The two limits
are simultaneous constraints: improvement in one endpoint cannot compensate
for failure of the other.

\section{Selective Restoration Policy}
\label{sec:method}

\subsection{Candidate actions and common detector}

The action pool contains nearest-neighbor display of the controlled observation,
bilinear and bicubic interpolation, Gaussian-smoothed bicubic, sharpened bicubic,
and a compact learned residual restorer.  The learned model applies an input
convolution, six 32-channel residual blocks, and an output residual added to the
bicubic image.  Smoothing, sharpening, and learned priors are included because
they induce visibly different evidence-loss--activation tradeoffs, not because
they represent an exhaustive set of restoration models.  Architecture breadth is
not the scientific claim; the claim is the selective decision layer applied to
a fixed candidate pool.

A compact three-level U-Net detector \cite{ronneberger2015unet} is trained on
reference-resolution training images.  Within a repetition, the same detector
checkpoint and pixel threshold are applied to all six actions.  Candidate
comparisons therefore change the image transformation while holding the
pixel-level decision rule fixed.  The policy is post-hoc with respect to the
candidate transforms and detector, but application requires generating and
scoring all candidates; no efficiency benefit is assumed.

\subsection{Action-specific ranking scores}

For candidate \(a_k(X)\), let \(P_k=f(a_k(X))\) and let \(P_{\rm raw}\) be the
raw-action detector map.  Six application-time features form \(z_k(X)\):
\begin{enumerate}
  \item mean binary entropy of \(P_k\);
  \item mean squared error after bicubic projection of \(a_k(X)\) back to the
  observed low-resolution grid;
  \item mean absolute detector-map difference \(|P_k-P_{\rm raw}|\);
  \item absolute change in mean detector score relative to \(P_{\rm raw}\);
  \item mean absolute image residual from the raw action; and
  \item fraction of pixels for which \(P_k\ge0.5\).
\end{enumerate}
No reference image or defect mask is required to compute these features at
application time.

For each action, two standardized, class-balanced logistic regressions are fit
on the detector-validation role.  The evidence-loss regression uses only
positive images; the excess-activation regression uses all images.  Denote their
sigmoid outputs by \(\widehat s_{\mathrm{loss},k}(X)\) and
\(\widehat s_{\mathrm{act},k}(X)\).  Class balancing changes the fitted class
prior, so these outputs are treated only as ranking scores.  They are neither
claimed nor required to be calibrated probabilities.

Candidate \(k\) receives the normalized worst-endpoint score
\begin{equation}
 q_k(X)=\max\!\left\{
 \frac{\widehat s_{\mathrm{loss},k}(X)}{\alpha_{\mathrm{loss}}},
 \frac{\widehat s_{\mathrm{act},k}(X)}{\alpha_{\mathrm{act}}}
 \right\}.
 \label{eq:minimax-score}
\end{equation}
The selected action and gate score are
\begin{equation}
 a^\star(X)=\arg\min_{a_k\in\mathcal A_0(X)}q_k(X),
 \qquad q^\star(X)=\min_k q_k(X).
 \label{eq:action-selection}
\end{equation}
Ties follow the fixed order raw, bilinear, bicubic, smoothed bicubic, sharpened
bicubic, and learned.  The raw display therefore competes directly with every
restoration.  The fitted scores determine an ordering; held-out incident counts
determine whether the resulting gate passes certification.

\subsection{Threshold selection and certification}

The calibration inventory is partitioned by a deterministic, seed-independent
identifier-hash rule into approximately 30\% threshold-tuning and 70\%
certification roles.  On the tuning role, candidate thresholds are the ordered
unique values of \(q^\star\).  For each prefix, the empirical evidence-loss rate
is computed among accepted positives and the empirical excess-activation rate
among all accepted images.  A prefix is eligible only if it contains at least 15
accepted images and 15 accepted positives and satisfies
\begin{equation}
 \widehat R_{\mathrm{loss}}\le
 0.5\alpha_{\mathrm{loss}},\qquad
 \widehat R_{\mathrm{act}}\le
 0.5\alpha_{\mathrm{act}}.
 \label{eq:tuning-margin}
\end{equation}
The largest eligible prefix fixes \(\widehat\theta\).  If none is eligible, the
policy has no automatic-return region.  The half-target margin is a heuristic
for choosing a candidate gate; it has no coverage guarantee.  Statistical
validity comes only from evaluating the fixed gate on the disjoint certification
role.

\begin{table}[!htb]
\centering
\caption{Rank--tune--certify procedure for one fixed policy.}
\label{tab:algorithm}
\small
\begin{tabularx}{\linewidth}{@{}p{0.7cm}X@{}}
\toprule
Step & Operation \\
\midrule
1 & Fit action-specific evidence-loss scores on positive detector-validation
images and excess-activation scores on all detector-validation images. \\
2 & On threshold-tuning images, select \(a^\star\) and choose the largest prefix
meeting the minimum-evidence and half-target criteria. \\
3 & Fix the detector, score models, candidate order, and gate threshold without
using certification outcomes. \\
4 & On the certification role, count \((\eLoss,\nLoss)\) among accepted positives
and \((\eAct,\nAct)\) among all accepted images; the policy passes only if both
exact upper bounds meet their targets. \\
5 & For a policy fixed in advance, return \(a^\star(X)\) when its certificate
passes and the gate accepts; otherwise assign the image to review. \\
\bottomrule
\end{tabularx}
\end{table}

\Cref{tab:algorithm} separates score fitting, threshold choice, certification,
and the application decision.  The reproducibility archive stores, for each
image and candidate action, the six features, incident labels, selected action,
gate score, threshold, certification counts, exact bounds, and
population-specific coverage.  These records are sufficient to reconstruct every policy
comparison without retraining the detector or restorer.

\section{Exact Marginal Certification Guarantee}
\label{sec:guarantee}

Let \(\mathcal F_0\) denote all information used before certification, including
the trained detector and restorer, fitted score models, action order, detector
threshold, and gate threshold \(\widehat\theta\).  The guarantee concerns one
policy for which these objects are \(\mathcal F_0\)-measurable.  Conditional on
\(\mathcal F_0\), let
\(Z_i=(X_i,M_i)\), \(i=1,\ldots,N_{\mathrm{cert}}\), be i.i.d. image-level units
from the nominal distribution \(\mathcal D\), independent of the data used to
construct \(\mathcal F_0\).  This is the working sampling model.  The public
Carinthia-S metadata do not identify wafers, lots, specimens, or repeated fields
of view, so the physical independence assumption cannot be audited from the
manifest.

For the fixed policy, define
\begin{align}
 \nLoss &= \sum_{i=1}^{N_{\mathrm{cert}}}
 \ind\{G_{\widehat\theta}(X_i)=1,M_i\ne0\},\\
 \eLoss &= \sum_{i=1}^{N_{\mathrm{cert}}}
 \ind\{G_{\widehat\theta}(X_i)=1,M_i\ne0,
               \ell_{\mathrm{loss},i}=1\},\\
 \nAct &= \sum_{i=1}^{N_{\mathrm{cert}}}
 \ind\{G_{\widehat\theta}(X_i)=1\},\\
 \eAct &= \sum_{i=1}^{N_{\mathrm{cert}}}
 \ind\{G_{\widehat\theta}(X_i)=1,
               \ell_{\mathrm{act},i}=1\}.
 \label{eq:certificate-counts}
\end{align}
Conditional on \(\mathcal F_0\) and the corresponding accepted-sample size,
\(\eLoss\) is binomial with parameter \(\lossrisk(\widehat\theta)\), and
\(\eAct\) is binomial with parameter \(\actrisk(\widehat\theta)\).  The two
counts share observations and need not be independent.

For \(0<\gamma<1\), define the one-sided Clopper--Pearson upper bound
\cite{clopper1934}
\begin{equation}
 U(e,n;\gamma)=
 \begin{cases}
  1, & n=0\ \text{or}\ e=n,\\
  \operatorname{Beta}^{-1}(1-\gamma;e+1,n-e), & 0\le e<n.
 \end{cases}
 \label{eq:cp-bound}
\end{equation}
The convention \(U(e,0;\gamma)=1\) ensures that absence of endpoint evidence
cannot produce a passing certificate.  With joint error level \(\delta\), set
\begin{equation}
 \ucbLoss=U(\eLoss,\nLoss;\delta/2),\qquad
 \ucbAct=U(\eAct,\nAct;\delta/2).
 \label{eq:dual-bounds}
\end{equation}
The fixed policy passes when
\(\ucbLoss\le\alpha_{\mathrm{loss}}\) and
\(\ucbAct\le\alpha_{\mathrm{act}}\).

\begin{proposition}[False certification for one fixed policy]
\label{prop:false-certification}
Under the conditional i.i.d. and data-separation conditions above,
\begin{multline}
 \Pr\!\Bigl(
  \ucbLoss\le\alpha_{\mathrm{loss}},\
  \ucbAct\le\alpha_{\mathrm{act}},\\
  \text{and }[\lossrisk(\widehat\theta)>\alpha_{\mathrm{loss}}
  \ \text{or}\
  \actrisk(\widehat\theta)>\alpha_{\mathrm{act}}]
  \ \bigm|\ \mathcal F_0\Bigr)\le\delta.
 \label{eq:false-certification}
\end{multline}
The same inequality holds unconditionally by iterated expectation.
\end{proposition}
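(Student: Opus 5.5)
The argument is a union bound over the two endpoints, combined with the exact validity of the one-sided Clopper--Pearson bound applied conditionally on the accepted-sample size. Fix \(\mathcal F_0\). Then \(\widehat\theta\), the selected action \(a^\star\), the gate \(G_{\widehat\theta}\), and the two risks \(p_{\mathrm{loss}}=\lossrisk(\widehat\theta)\) and \(p_{\mathrm{act}}=\actrisk(\widehat\theta)\) are all deterministic.

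The false-certification event is contained in the union of two events:
\begin{equation*}
E_{\mathrm{loss}}=\{\ucbLoss\le\alpha_{\mathrm{loss}},\ p_{\mathrm{loss}}>\alpha_{\mathrm{loss}}\},\qquad
E_{\mathrm{act}}=\{\ucbAct\le\alpha_{\mathrm{act}},\ p_{\mathrm{act}}>\alpha_{\mathrm{act}}\}.
\end{equation*}
It therefore suffices to show that each has conditional probability at most \(\delta/2\). No independence between \(\eLoss\) and \(\eAct\) is needed, which is why the shared observations do not matter.

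\textbf{The loss endpoint.} If \(p_{\mathrm{loss}}\le\alpha_{\mathrm{loss}}\), then \(E_{\mathrm{loss}}\) is empty. Otherwise \(E_{\mathrm{loss}}\subseteq\{\ucbLoss<p_{\mathrm{loss}}\}\), and we bound \(\Pr(\ucbLoss<p_{\mathrm{loss}}\mid\mathcal F_0)\).

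First condition additionally on the vector of acceptance-and-positivity indicators \(\ind\{G_{\widehat\theta}(X_i)=1,M_i\ne0\}\), \(i=1,\ldots,N_{\mathrm{cert}}\). Under the conditional i.i.d.\ model, the incident indicators of the selected units are i.i.d.\ Bernoulli\((p_{\mathrm{loss}})\), because each unit's incident given its own acceptance has this law by the definition in \eqref{eq:population-risks}. Hence \(\eLoss\mid\nLoss=n\sim\mathrm{Bin}(n,p_{\mathrm{loss}})\); this is the step I expect to need the most care.

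Now apply the standard Clopper--Pearson coverage property. Since \(U(e,n;\gamma)\) is nondecreasing in \(e\), the event \(\{U(\eLoss,n;\gamma)<p\}\) equals \(\{\eLoss\le e^*\}\), where \(e^*\) is the largest \(e<n\) with \(\operatorname{Beta}^{-1}(1-\gamma;e+1,n-e)<p\). By the beta--binomial identity \(\Pr(\mathrm{Bin}(n,p)\le e)=\Pr(\mathrm{Beta}(e+1,n-e)>p)\), this probability is at most \(\gamma\). The case \(n=0\) gives \(U=1\ge p\), and \(e=n\) also gives \(U=1\), so neither contributes. Averaging over \(\nLoss\) yields \(\Pr(E_{\mathrm{loss}}\mid\mathcal F_0)\le\delta/2\).

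\textbf{The activation endpoint.} The same argument applies with the acceptance indicators alone, using \(\nAct\) and \(p_{\mathrm{act}}\), and gives \(\Pr(E_{\mathrm{act}}\mid\mathcal F_0)\le\delta/2\).

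\textbf{Conclusion.} Summing the two bounds gives \eqref{eq:false-certification}. Taking expectations over \(\mathcal F_0\) gives the unconditional statement.

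\textbf{Main obstacle.} The delicate point is the conditional binomial claim, that is, that conditioning on which units were accepted leaves the incidents i.i.d.\ with the population conditional rate. I would argue it by factorizing the joint law of the i.i.d.\ pairs \((\text{acceptance indicator},\ \text{incident indicator})\) unit by unit. This factorization relies on \(a^\star\) and \(\widehat\theta\) being \(\mathcal F_0\)-measurable, which the data-separation condition provides.
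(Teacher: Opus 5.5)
Your proposal is correct and follows the paper's proof. You contain the false-certification event in the union of the two single-endpoint failure events, bound each by \(\delta/2\) using exact one-sided Clopper--Pearson coverage conditional on the accepted count, and note that no independence between the endpoints is needed. Beyond that, you correctly fill in the details the paper takes for granted: the conditional binomial law obtained by unit-wise factorization of the i.i.d.\ pairs, the beta--binomial identity behind Clopper--Pearson validity, and the \(n=0\) and \(e=n\) conventions.
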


\begin{proof}
For either endpoint \(j\), exact one-sided Clopper--Pearson coverage gives
\(\Pr\{R_j>U_j\mid\mathcal F_0\}\le\delta/2\), including after conditioning on
the accepted-sample size.  If a policy passes while either target is false, at
least one of the two upper bounds fails to cover its incident probability.  A
union bound over the two endpoints gives \(\delta\).  Independence between the
endpoint indicators is unnecessary.
\end{proof}

\paragraph{Policy-family boundary.}
\Cref{prop:false-certification} is marginal for one policy fixed without its
certification outcomes.  It does not cover searching across training seeds,
action pools, feature sets, thresholds, or architecture families and then
choosing a passing member.  If five separately certified policies were treated
as a menu and any passing policy could be selected, the direct union bound would
be at most \(5\delta=0.50\) at \(\delta=0.10\), not 0.10.  A prospective study
must therefore fix one policy, allocate an error budget across a prespecified
family, use an appropriate familywise procedure, or collect a fresh
certification sample after selection.  In this paper, seed-level pass counts
measure model-fitting sensitivity; no passing seed is selected for deployment.

\paragraph{Sampling and transport boundary.}
The proposition is a repeated-sampling statement, not a posterior probability
that an individual returned image is safe.  It also requires the stated sampling
law; arbitrary exchangeability, correlated crops, or unrecorded wafer-level
clustering do not automatically yield the conditional binomial model.  Clustered
production data require cluster-level sampling or a dependence-aware analysis.
A certificate also does not transfer to a new morphology, scanner, recipe, or
prevalence mixture without additional assumptions and evidence.

\paragraph{Retrospective evidence boundary.}
The Carinthia-S identities also appear in \cite{yang2026waferinspectsr}
(\Cref{tab:prior-comparison}).  The present train/validation/tune/certify/test
roles and policy records are distinct, but the physical images are not a newly
acquired confirmation set.  The proposition states what the split-sample
procedure controls under its working model; the empirical evidence should be
interpreted as retrospective.  Prospective, cluster-identified acquisitions are
needed before making a production claim.

\section{Evaluation Design}
\label{sec:experiments}

The evaluation asks three questions, and \Cref{sec:results} answers them in this
order.  First, does a fixed selective policy pass both held-out risk bounds, and
how does it compare with simpler fixed actions?  Second, do the candidate actions
produce materially different fidelity and detector-incident profiles, so that
there is something for a policy to exploit?  Third, where does the result fail
when the endpoint, feature set, evidence volume, morphology, architecture, or
dataset is changed?

\subsection{Carinthia-S population and data roles}

Carinthia-S contains 4,591 public SEM image--mask pairs from six morphology
classes \cite{carinthias2025}.  Classes 3, 4, and 6 define the nominal
population.  A deterministic identifier-hash partition assigns 3,186 images to
training, 431 to detector validation, 461 to calibration, and 446 to held-out
test.  A nonempty mask defines a positive image.  The test role contains 430
positives and 16 clean images; one class-6 test image has a nonempty mask and is
counted as positive.  Classes 1, 2, and 5 form a disjoint 67-image
morphology-shift diagnostic and are excluded from fitting, threshold selection,
and nominal certification.

For each image, a fixed schedule assigns mild, moderate, or severe degradation.
The reference is blurred with \(\sigma_b\in\{0.6,1.2,2.0\}\), downsampled by a
factor of two, and corrupted with Gaussian noise
\(\sigma_n\in\{0.010,0.025,0.045\}\), respectively.  Assignment depends only on
the split role and image identifier, not on training seed or action; all actions
therefore receive the same controlled observation for a given image.  These
matched transformations test restoration behavior under known information loss.
They are not a specified SEM acquisition model (\Cref{fig:study-design}).

\begin{figure}[!htb]
  \centering
  \includegraphics[width=\linewidth]{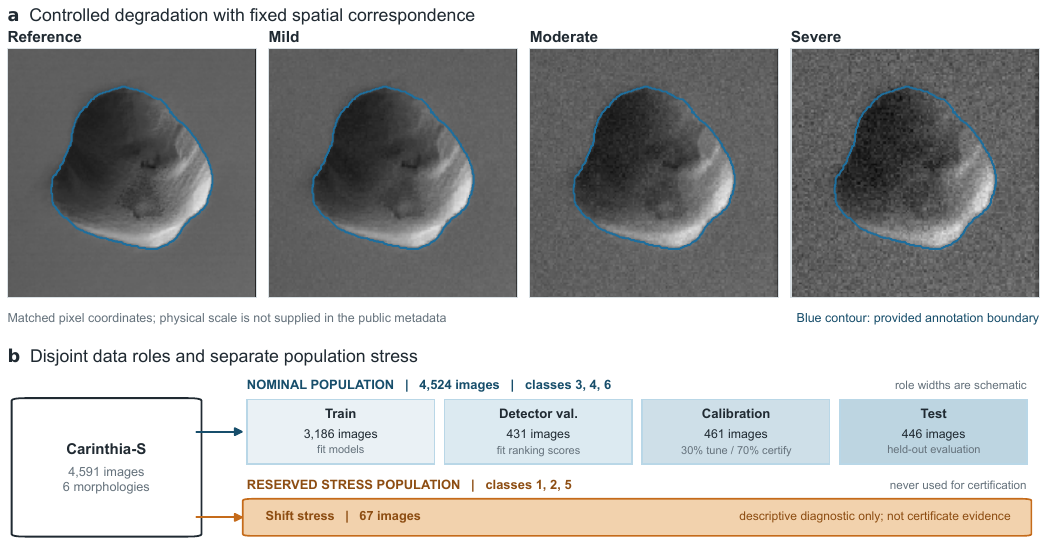}
  \caption{Controlled observations and image-level data roles.  (a) A reference
  image and the three blur--noise levels share pixel correspondence; the blue
  contour traces the provided annotation boundary.  Physical scale is unavailable in the
  public metadata.  (b) Nominal classes are partitioned into model fitting,
  score fitting, threshold tuning/certification, and held-out test roles.
  Reserved classes form a separate descriptive stress and never contribute to a
  nominal certificate.}
  \label{fig:study-design}
\end{figure}

\begin{figure}[!htb]
  \centering
  \includegraphics[width=\linewidth]{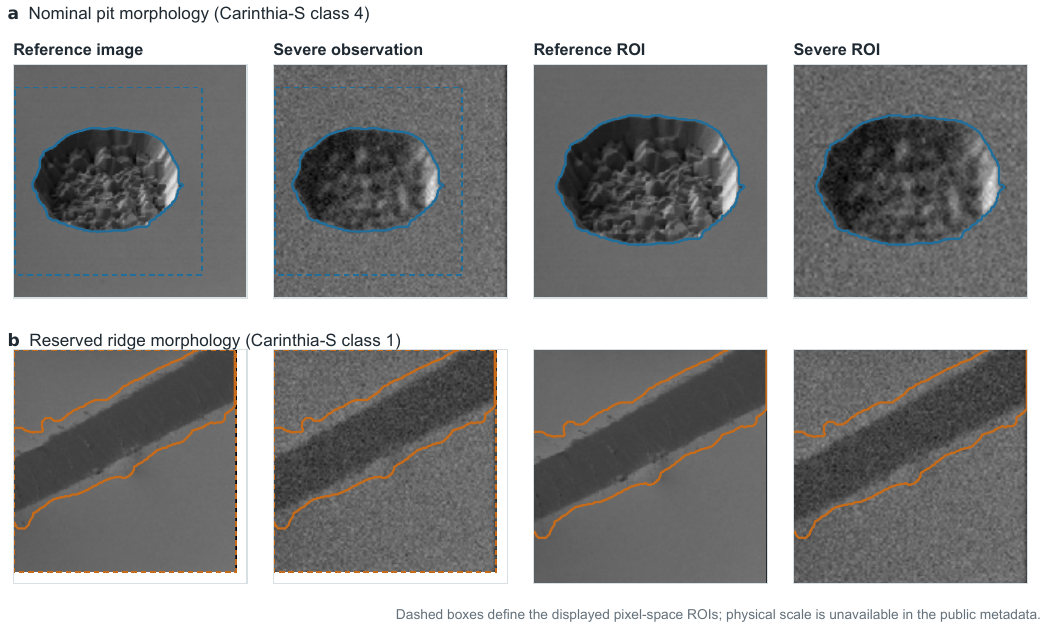}
  \caption{Image evidence for the nominal and reserved populations.  Each row
  shows a full reference, its severe controlled observation, and matching
  pixel-space regions of interest.  (a) A nominal pit morphology from the test
  population (Carinthia-S class~4).  (b) A reserved ridge morphology
  (Carinthia-S class~1).  Solid contours trace provided annotation boundaries; dashed boxes
  define the displayed regions of interest.  The examples illustrate appearance
  shift and are not themselves certification evidence.}
  \label{fig:domain-shift}
\end{figure}

\Cref{fig:domain-shift} illustrates the nominal--reserved appearance
difference; the examples are qualitative and do not enter certification.

\subsection{Models, repetitions, and endpoints}

The primary family combines a compact three-level U-Net detector with a
six-block residual restorer.  An alternate family combines an attention-gated
U-Net detector with a U-Net residual restorer under the same data roles and
endpoints.  Both families are trained from scratch at seeds 201, 203, 207, 211,
and 223.  The observation schedule and tune/certify membership remain fixed, so
these repetitions vary model fitting rather than the test sample.  Compact
architectures are deliberate: they provide reproducible checkpoints for the
decision protocol rather than a claim of state-of-the-art restoration quality.

Within each repetition, the detector pixel threshold is selected on clean
regions of the detector-validation role to target a pixel-level false-positive
rate of \(10^{-3}\).  It is then fixed for all actions.  The primary
evidence-loss incident is recall below 0.25; the boundary analysis refits the
score model and policy at a 0.50 floor.  Excess activation is clean-region FPR
above 0.002.  Both policy-risk targets are 0.15, with joint marginal
false-certification level \(\delta=0.10\).  Calibration membership is divided
approximately 30\%/70\% into tuning and certification using a deterministic
seed-independent identifier-hash rule (\Cref{tab:protocol}).

\begin{table}[!htb]
\centering
\caption{Evaluation specification and claim boundary.}
\label{tab:protocol}
\small
\begin{tabularx}{\linewidth}{@{}>{\raggedright\arraybackslash}p{3.2cm}X@{}}
\toprule
Element & Setting \\
\midrule
Statistical unit & One image under an i.i.d.\ working model; wafer/lot grouping is unavailable \\
Nominal population & Carinthia-S classes 3, 4, and 6; 461 calibration and 446 test images (430 positive, 16 clean) \\
Reserved population & Classes 1, 2, and 5; 67 positives; descriptive transfer diagnostic only \\
Repetitions & Seeds 201, 203, 207, 211, and 223; recurring identities; descriptive training sensitivity \\
Tune / certify & Deterministic identifier-hash split; one tuned threshold evaluated on the certification role \\
Primary incidents & Positive-conditional recall \(<0.25\); all-accepted clean-region FPR \(>0.002\) \\
Boundary endpoint & Positive-conditional recall \(<0.50\), with score refitting and threshold reselection \\
Risk limits & \(\alpha_{\mathrm{loss}}=\alpha_{\mathrm{act}}=0.15\), marginal joint \(\delta=0.10\) \\
Minimum tuning evidence & 15 accepted images and 15 accepted positives \\
Second dataset & KolektorSDD finite-sample and detector-competence stress; no passing-policy claim \\
\bottomrule
\end{tabularx}
\end{table}

\subsection{Comparators and outcomes}

The all-action policy, which ranks the raw display together with all five
restorations, is the primary adaptive comparator.  Fixed-action policies use
raw, bilinear, bicubic, smoothed bicubic, sharpened bicubic, or learned
restoration alone.  A restoration-only adaptive pool excludes raw.  Supporting
comparators include entropy-gated raw display and an outcome-informed oracle
that tests whether even idealized action choice can rescue an accept-all gate.
Every comparator is fit, tuned, and certified separately; none inherits the
primary policy's certificate.

The main outcome is the number of marginal seed-level certificates that pass.
\emph{Conditional coverage} is the fraction below the tuned gate before applying
the certificate decision.  \emph{Pass-gated coverage} equals conditional
coverage for a passing policy and zero otherwise.  Across seeds, its mean and
standard deviation summarize training sensitivity, not confidence intervals or
independent-factory replication.  We also report positive and clean coverage,
endpoint-specific certification counts and bounds, risk--coverage curves on the
certification role, and incident rates on the held-out test role.

Feature ablations remove one or more score inputs and pool ablations remove
candidate actions.  They are distinct policies with separate marginal
certificates.  Their purpose is to test whether the all-action
policy is actually supported over simpler alternatives.  The
study configuration was not externally preregistered.  Split manifests,
image--action records, exact counts, and figure source tables are released so
all reported comparisons can be reconstructed.

\subsection{KolektorSDD boundary study}

KolektorSDD contains 399 images from 50 items \cite{tabernik2019kolektor}.  We
preserve item-level separation.  Official fold~0 is too small for the minimum
positive tuning requirement.  A larger item-level diagnostic split contains 18
calibration positives, but its deterministic 30\%/70\% subdivision yields only
five tuning positives and 13 certification positives.  Five is below the
method's minimum of 15, and even zero incidents among 13 accepted certification
positives gives \(U(0,13;0.05)=0.206>0.15\).  The dataset therefore cannot
support the stated certificate under this design; detector competence is
reported as a second, independent diagnostic.

\section{Results}
\label{sec:results}

\subsection{Most primary policies do not pass, and simpler policies do better}

\begin{figure}[!htb]
  \centering
  \includegraphics[width=\linewidth]{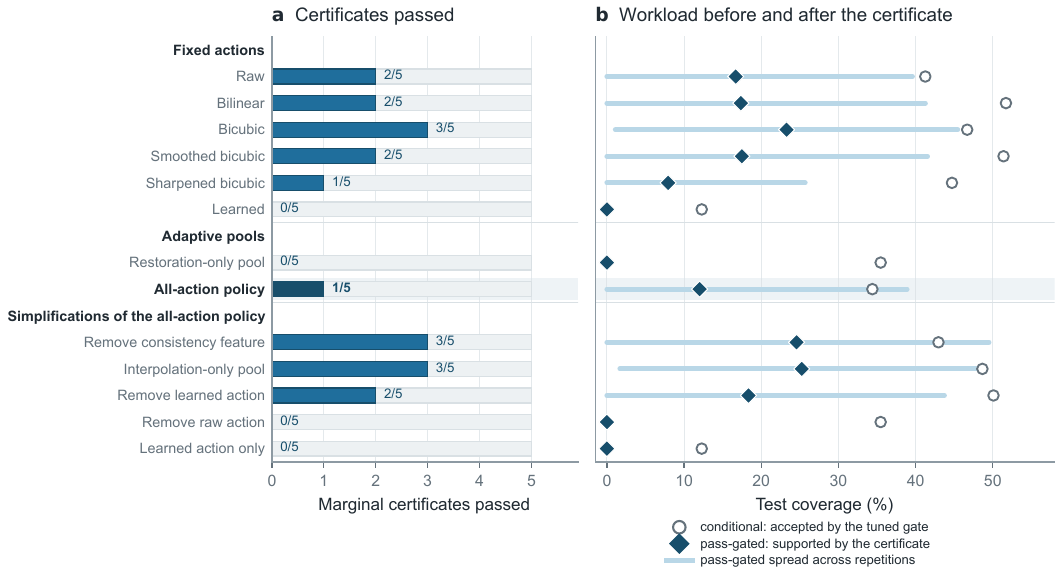}
  \caption{Policy comparison under the same five training repetitions.  Every
  row is fit, tuned, and certified separately; none inherits another row's
  certificate.  (a) Number of marginal certificates that pass.  (b) Test
  coverage before the certificate decision (conditional, open circles) and the
  coverage the decision actually supports (pass-gated, diamonds), with segments
  spanning one standard deviation across repetitions, clipped at zero.  The
  all-action policy (highlighted) passes fewer certificates than fixed bicubic
  and than three of its own simplifications.  The rows are separate marginal
  statements.}
  \label{fig:policy-comparison}
\end{figure}

\begin{table}[!htb]
\centering
\caption{Nominal Carinthia-S results at the 25\% recall floor for the primary
compact U-Net + residual-restorer family.  Each seed--policy pair has a separate
marginal certificate.  Conditional coverage is averaged before the certificate
decision; pass-gated coverage is set to zero when that policy fails.}
\label{tab:primary}
\small
\begin{tabular}{@{}lccc@{}}
\toprule
Policy & Pass/5 & Cond. coverage (\%) & Pass-gated coverage (\%) \\
\midrule
Raw & 2 & 41.3 & \(16.7\pm22.9\) \\
Bilinear & 2 & 51.7 & \(17.4\pm23.9\) \\
Bicubic & 3 & 46.7 & \(23.3\pm22.2\) \\
Smoothed bicubic & 2 & 51.4 & \(17.5\pm24.1\) \\
Sharpened bicubic & 1 & 44.7 & \(7.9\pm17.7\) \\
Learned & 0 & 12.3 & \(0.0\pm0.0\) \\
Restoration-only pool & 0 & 35.5 & \(0.0\pm0.0\) \\
\textbf{All-action policy} & \textbf{1} & \textbf{34.4} &
\(\mathbf{12.0\pm26.9}\) \\
\bottomrule
\end{tabular}
\end{table}

The primary all-action policy passes in one of five training repetitions
(\Cref{fig:policy-comparison} and \Cref{tab:primary}).  Seeds 201 and 203 have no tuning
prefix with the required
15 accepted positives.  Seed 207 fails both bounds, seed 223 fails
the excess-activation bound, and seed 211 passes both.  The seed-211 policy has
60.1\% test coverage and certification bounds
\(\ucbLoss=0.135\) and \(\ucbAct=0.113\).  Its held-out test incident rates are
11.3\% and 3.7\%, respectively.  These test rates describe the policy after
the decision; they do not replace the certification counts in
\Cref{tab:certificate-evidence}.

\begin{table}[!htb]
\centering
\caption{Certification evidence for the primary all-action policy.  A pass is
a marginal decision for the stated seed; the table is not a menu from which a
seed may be selected without multiplicity control.  Test incident counts are
descriptive outcomes after the certification decision.}
\label{tab:certificate-evidence}
\scriptsize
\setlength{\tabcolsep}{3.2pt}
\begin{tabular}{@{}rccccc@{}}
\toprule
Seed & Cert. loss \(e/n\;(U)\) & Cert. activation \(e/n\;(U)\) & Test loss \(e/n\) & Test activation \(e/n\) & Test C / result \\
\midrule
201 & -- (1.000) & -- (1.000) & -- & -- & 0.0\% / no gate \\
203 & -- (1.000) & -- (1.000) & -- & -- & 0.0\% / no gate \\
207 & 20/169 (0.167) & 21/182 (0.162) & 30/270 & 33/282 & 63.2\% / fails both \\
211 & 14/158 (0.135) & 12/169 (0.113) & 29/256 & 10/268 & 60.1\% / passes \\
223 & 1/123 (0.038) & 15/133 (0.168) & 5/205 & 7/217 & 48.7\% / fails activation \\
\bottomrule
\end{tabular}
\end{table}

\paragraph{Worked certificate reading.}
For seed 211, the tuning-selected gate accepts 158 positive images, of which 14
have evidence-loss incidents, and 169 images overall, of which 12 have
excess-activation incidents.  With endpoint error allocation
\(\delta/2=0.05\), the one-sided Clopper--Pearson calculations give
\(U(14,158;0.05)=0.135\) and \(U(12,169;0.05)=0.113\).  Both values are below
the 0.15 targets, so this fixed policy passes its marginal certificate.  Seed
223 illustrates why both endpoints are necessary: its evidence-loss bound is
0.038, but its excess-activation bound is 0.168 and the policy fails.

The key comparison is unfavorable to adaptivity
(\Cref{fig:policy-comparison}).  Fixed bicubic passes in three of five
repetitions and reaches \(23.3\%\pm22.2\%\) pass-gated coverage, compared with
1/5 and \(12.0\%\pm26.9\%\) for the all-action policy, whose pass count is also
exceeded by bilinear and smoothed bicubic.  Panel~(b) shows where the workload
is lost: conditional coverage is broadly similar across policies, between 34\%
and 52\% for everything except learned restoration, so the differences in
supported workload come from the certificate decision rather than from how much
each tuned gate accepts.  These separately certified comparisons do not prove
that bicubic is universally best, but they do not support adaptive-policy
superiority in the present study.

\subsection{Risk--coverage behavior and evidence sufficiency}

\begin{figure}[!htb]
  \centering
  \includegraphics[width=\linewidth]{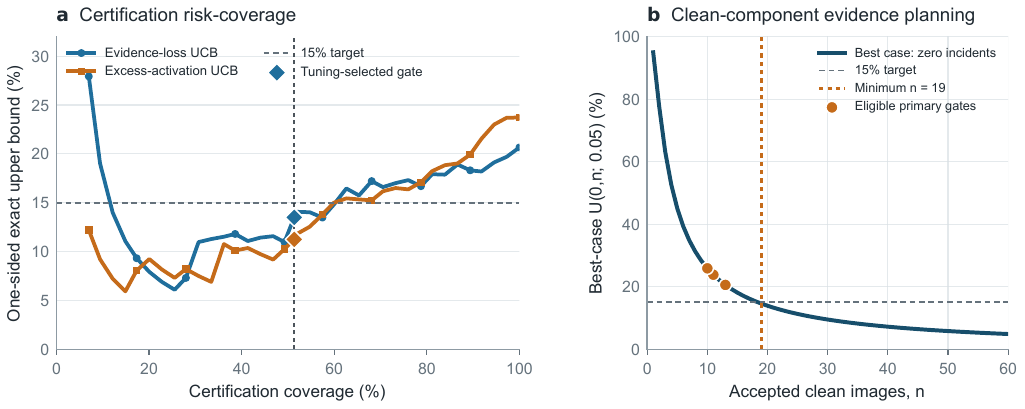}
  \caption{Certification behavior and an evidence-planning diagnostic.
  (a) One-sided exact evidence-loss and excess-activation upper bounds versus
  actual certification coverage for seed 211.  The vertical line and diamonds
  mark the gate selected on the disjoint tuning role; the curves at other
  prefixes are descriptive and were not candidate certificates.
  (b) Best-case clean-component bound \(U(0,n;0.05)\) as a function of the
  number of accepted clean images.  Dots mark accepted-clean counts for the
  three primary repetitions with eligible gates (seed 207: \(n=13\); seed 211:
  \(n=11\); seed 223: \(n=10\)); seeds 201 and 203 have no eligible gate.  This panel is a sample-planning diagnostic
  for the clean component, not a separately certified endpoint.  The primary
  excess-activation certificate is evaluated over all accepted images.}
  \label{fig:risk-coverage}
\end{figure}

\Cref{fig:risk-coverage}a separates the tuning decision from its held-out
evaluation.  At the tuning-selected gate, certification coverage is 51.4\% and
both upper bounds are below 15\%.  Expanding the accepted score prefix raises
coverage but eventually carries both bounds above target.  Because the off-gate
prefixes use certification outcomes, the curve diagnoses score ordering only.

Panel~(b) quantifies the information available for the clean-only activation
component used in prevalence standardization.  Even with zero incidents, at
least 19 accepted clean images are needed for a one-sided 95\% upper bound to
reach 0.15.  The eligible primary gates accept only 10--13 clean images on the
certification role.  Thus the clean component remains too imprecise for a
separate 15\% statement, even though the prespecified all-accepted activation
endpoint has a much larger denominator.
\subsection{Candidate fidelity and detector incidents are differently ordered}

\begin{figure}[!htb]
  \centering
  \includegraphics[width=\linewidth]{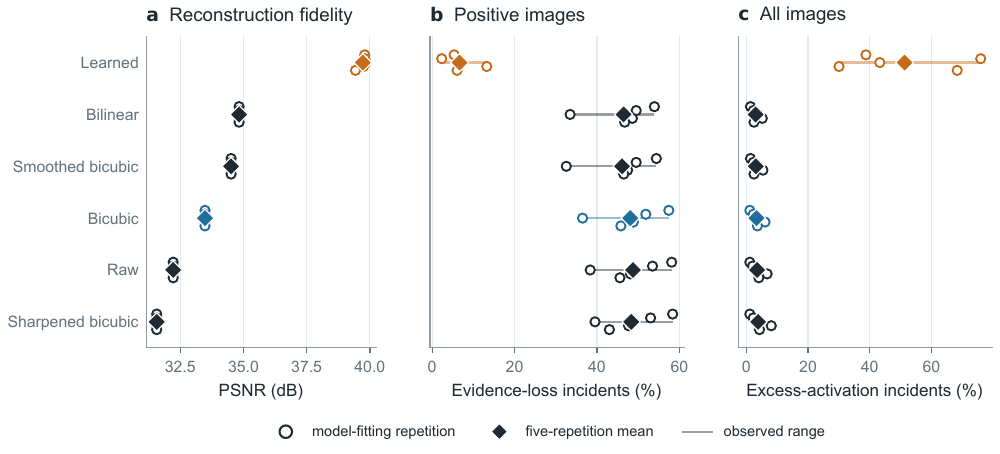}
  \caption{Fidelity and detector-relative incidents for the six candidate
  actions on the nominal test role.  Action rows are aligned across PSNR,
  evidence-loss incidence among positives, and excess-activation incidence among
  all images.  Open circles are individual training repetitions, diamonds are
  five-repetition means, and thin segments show the observed range.  High PSNR
  does not identify a dual-risk policy.}
  \label{fig:action-tradeoffs}
\end{figure}

The action pool nevertheless contains real tradeoffs
(\Cref{fig:action-tradeoffs}).  Learned restoration reaches mean PSNR
39.7~dB and reduces evidence-loss incidence to 6.7\%, compared with
46.1--48.7\% for raw and fixed interpolation.  Its mean excess-activation
incidence is 51.3\%, however, versus 3.0--3.8\% for those actions.  Sharpened
bicubic has lower fidelity but a different detector profile.  PSNR therefore
does not identify a policy satisfying both incident limits.  The practical
failure is not lack of candidate diversity; it is that the primary all-action policy
does not use that diversity reliably enough to pass more often than simple
fixed transformations.

\subsection{Fitted scores provide ordering, not probability calibration}

\begin{figure}[!htb]
  \centering
  \includegraphics[width=\linewidth]{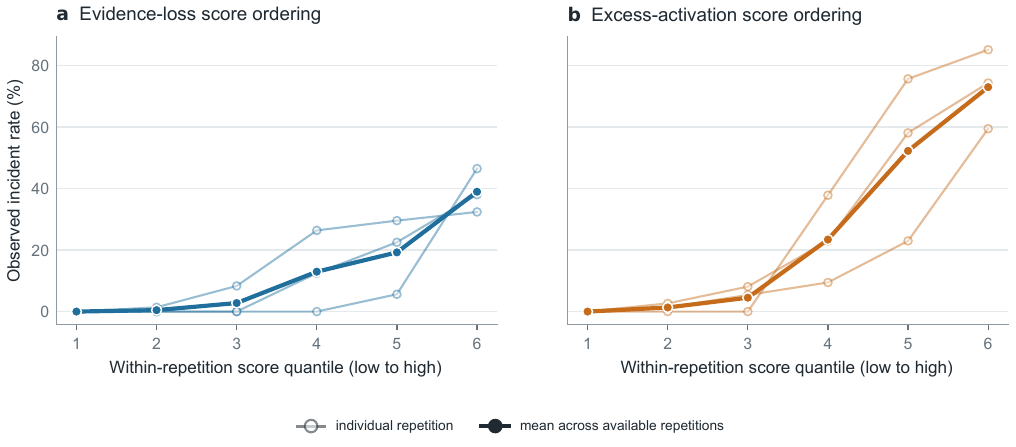}
  \caption{Observed incident rate across equal-frequency bins of each fitted
  endpoint score for the action selected on an image.  Bins are formed within
  each repetition having a defined gate.  Light open curves show individual
  repetitions and the thick filled curve their mean.  The horizontal
  coordinate is a within-repetition endpoint-score quantile, not a predicted
  probability or the minimax gate score; recurring identities make the
  cross-repetition curves descriptive.}
  \label{fig:risk-score-behavior}
\end{figure}

Incident rates generally increase from low to high score quantiles for both
endpoints (\Cref{fig:risk-score-behavior}), supporting use of the scores as
rankers.  The curves also vary materially by training repetition.  Because the
logistic fits use class balancing, their sigmoid outputs do not estimate the
study-population probability without further correction and calibration.  The
method accordingly uses only the induced ordering.  A gate can pass solely from
its held-out incident counts and exact bounds.

\subsection{Illustrative decisions connect the gate to image evidence}

\begin{figure}[!htbp]
  \centering
  \includegraphics[width=\linewidth]{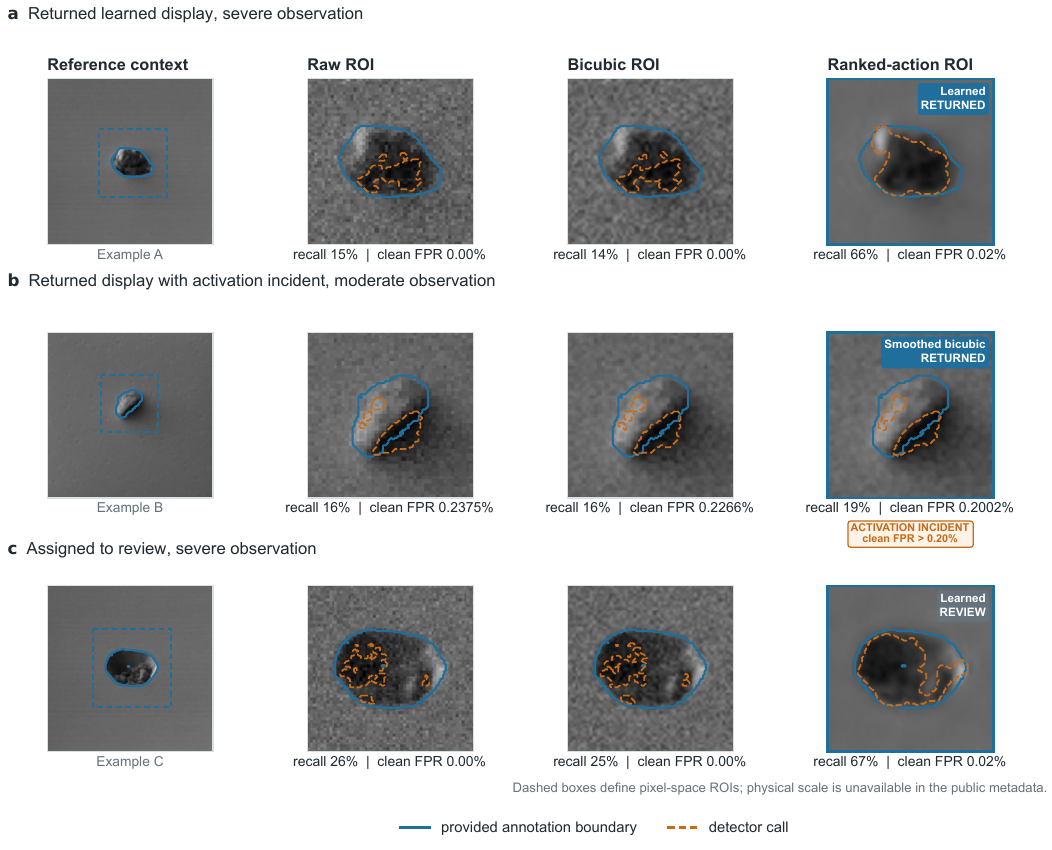}
  \caption{Illustrative test decisions for the seed-211 policy, the sole primary
  policy whose marginal certificate passes.  The reference context identifies
  each pixel-space region of interest; solid blue contours trace provided annotation
  boundaries and dashed orange contours are detector calls.  Metrics below each
  candidate are defect recall and clean-region FPR.  (a) Example~A: a learned
  action returned under severe degradation.  (b) Example~B: a returned
  smoothed-bicubic action whose clean-region FPR exceeds the 0.20\%
  excess-activation incident threshold.  (c) Example~C: a learned action ranked first
  but assigned to review.  The cases were selected to show these outcomes and
  are not a random sample.}
  \label{fig:qualitative-cases}
\end{figure}

\Cref{fig:qualitative-cases} shows why a policy needs both action choice and
review.  In panel (a), the learned candidate restores detector-supported defect
coverage relative to raw and bicubic.  Panel (b) shows that a passing marginal
policy can still make individual errors: the clean-region FPR crosses the
0.2\% incident threshold.  In panel (c), the ranked action improves recall but
its gate score remains outside the automatic-return region.  The certificate
controls a population rate, not every returned image.

\subsection{Feature and pool ablations expose unnecessary complexity}

\begin{table}[!htb]
\centering
\caption{Most informative policy simplifications under the same five seeds and
separate marginal certificates.  Values are pass-gated test coverage; the
complete ablation table and seed-level results appear in
\Cref{tab:supp-ablations,tab:supp-key-ablation-seeds}.}
\label{tab:key-ablations}
\small
\begin{tabular}{@{}lcc@{}}
\toprule
Policy variant & Pass/5 & Pass-gated coverage (\%) \\
\midrule
Full features, all actions & 1 & \(12.0\pm26.9\) \\
Remove consistency feature & 3 & \(24.6\pm24.9\) \\
Interpolation-only pool & 3 & \(25.2\pm23.6\) \\
Remove learned action & 2 & \(18.3\pm25.4\) \\
Remove raw action & 0 & \(0.0\pm0.0\) \\
Learned action only & 0 & \(0.0\pm0.0\) \\
\bottomrule
\end{tabular}
\end{table}

Removing the forward-consistency feature or restricting the pool to interpolation
raises the pass count from 1/5 to 3/5 (\Cref{tab:key-ablations}).  Conversely,
removing raw eliminates all passes, as does using only the learned action.  These
results show that the added candidates and features did not improve certification
in this design.  The pattern is consistent with, but does not establish, greater
score-model error from a larger action set.  Because all variants reuse identities and
were examined retrospectively, the table motivates a simpler prospective policy
rather than selecting the best variant from this study.

Supporting baselines reinforce the same point.  Entropy-gated raw display finds
no eligible threshold in any seed.  An outcome-informed oracle that chooses the
action with the smallest observed dual-incident indicator also fails an
accept-all certificate in every seed; for seed 211,
\(\ucbLoss=0.295\) (Supplementary \Cref{tab:supp-baseline-certificates}).  Even idealized action selection cannot compensate for a
gate that returns too many high-risk positives.

\subsection{Architecture and second-domain checks do not provide external confirmation}

The alternate attention-gated detector + U-Net restorer family passes 3/5
repetitions, with \(18.9\%\pm19.6\%\) pass-gated coverage.  Across its
passing repetitions, the unweighted mean held-out incident rates are 0.8\%
evidence loss and 4.4\% excess activation
(Supplementary \Cref{tab:supp-alternate-family}); denominators are not pooled.  The family
contrast shows architecture sensitivity, not architecture-independent validity;
each seed--family pair remains a separate marginal policy.

KolektorSDD supplies no external certificate: detector-validation Dice is only
approximately 0.033--0.042 across seeds, the split has five tuning positives
(below the required 15), and 13 certification positives cannot attain the 15\%
target even with zero incidents.  These detector-competence and evidence-volume
failures preclude a test of a successfully fitted gate on a new domain
(Supplementary \Cref{tab:supp-domain-diagnostics}).
\subsection{Severity and population shift reveal where return becomes unreliable}

\begin{figure}[!htb]
  \centering
  \includegraphics[width=\linewidth]{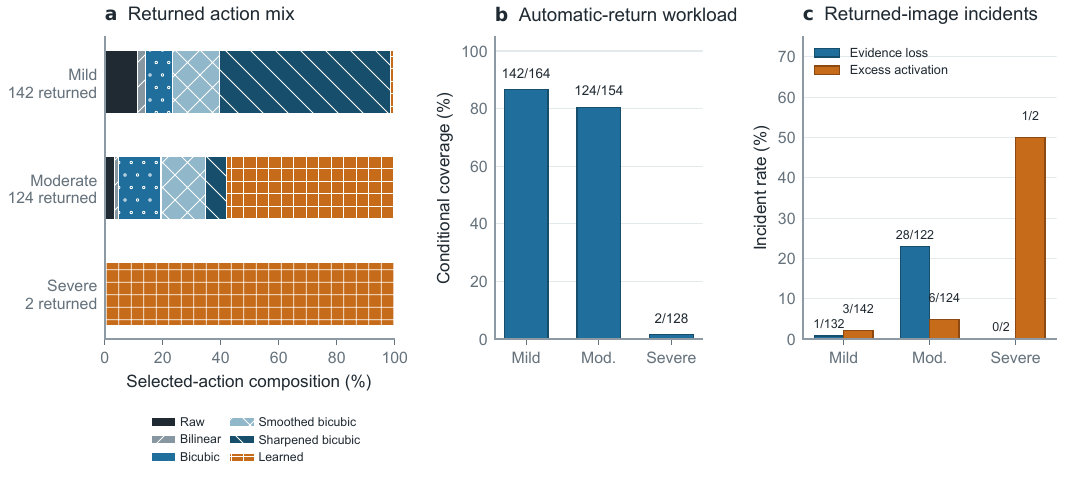}
  \caption{Behavior of the seed-211 policy by controlled degradation severity.
  (a) Action composition among returned images.  (b) Automatic-return workload
  with exact returned/total counts.  (c) Descriptive incident rates and
  endpoint-specific denominators among returned images.  Severity groups were
  not certified separately.}
  \label{fig:severity-behavior}
\end{figure}

The seed-211 gate returns 86.6\% of mild observations (142/164), 80.5\% of
moderate observations (124/154), and 1.6\% of severe observations (2/128)
(\Cref{fig:severity-behavior}).  Sharpened bicubic dominates mild returns
(84/142), while learned restoration dominates moderate returns (72/124).  Of
the two severe images returned, one incurs an excess-activation incident.  These
small subgroup counts emphasize why the aggregate certificate cannot be
reassigned to a severity stratum.  Panel (b) is the operational workload view:
most automatic returns occur on mild and moderate observations, while severe
cases are almost entirely reviewed.

\begin{figure}[!htb]
  \centering
  \includegraphics[width=\linewidth]{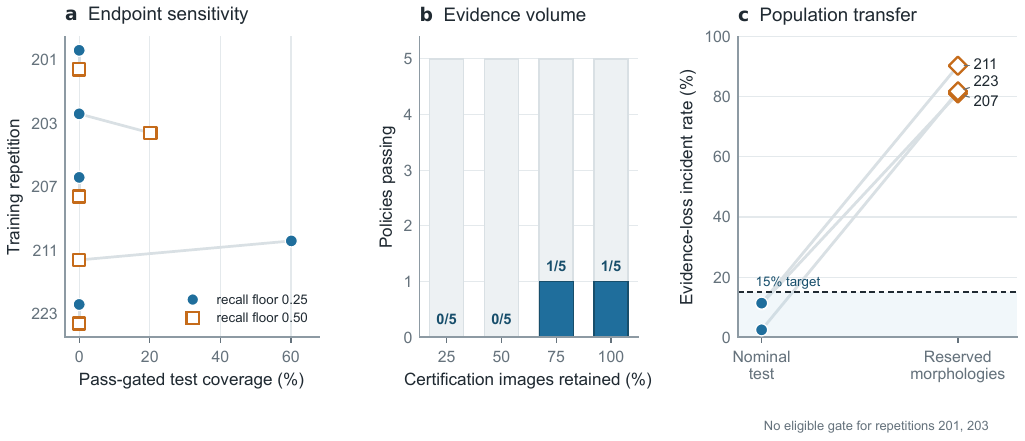}
  \caption{Certificate boundaries.  (a) Pass-gated coverage after refitting the
  score models and policy at recall floors 0.25 and 0.50.  (b) Number of policies
  passing when deterministic subsets of the fixed certification role are
  retained; this is a descriptive evidence-volume sensitivity.  (c)
  Evidence-loss incidence when nominal gates are applied to reserved morphologies.
  Repetitions without an eligible nominal gate are omitted from panel~(c) and
  noted in the figure.  No reserved-population point carries a transferred
  certificate.}
  \label{fig:boundary-evidence}
\end{figure}

At the 0.50 recall floor, one of five all-action policies still passes, but mean
pass-gated coverage falls to \(4.0\%\pm9.0\%\); seed 203, rather than seed 211,
passes (\Cref{fig:boundary-evidence}a).  Retaining 25\%, 50\%, 75\%, and 100\% of
the fixed certification role yields 0/5, 0/5, 1/5, and 1/5 passes
(\Cref{fig:boundary-evidence}b).  This monotone evidence limitation is expected:
small accepted-positive counts make an exact upper bound too wide even when the
observed incident rate is low.

Among seeds with a defined nominal gate, coverage on the 67 reserved
morphologies is 32.8--55.2\%, while evidence-loss incidence rises to
81.1--90.3\% (\Cref{fig:boundary-evidence}c).  This is direct empirical evidence
against transporting the nominal certificate.  Taken together, the main result
is a failure map rather than an adaptive-restoration victory: simple policies
are stronger on the nominal data, limited endpoint evidence constrains coverage,
and changed morphologies invalidate the nominal behavior.

\section{Discussion}
\label{sec:discussion}

\subsection{Certification changes the conclusion drawn from restoration results}

The candidate-level result alone would encourage an adaptive policy: learned
restoration reduces severe evidence-loss incidents, fixed interpolation limits
excess activation, and the preferred action changes with degradation
(\Cref{fig:action-tradeoffs,fig:severity-behavior}).  The policy-level result
reverses that impression.  Only one of five primary all-action policies passes
its marginal certificate, while fixed bicubic, an interpolation-only pool, and a
policy without the consistency feature pass in three of five repetitions
(\Cref{tab:primary,tab:key-ablations}).  Both levels of evidence are needed to
interpret the policy.
The action pool contains complementary transforms, but in these five
repetitions the resulting policy did not yield better certification outcomes
than the simpler alternatives.

This is a useful outcome for selective inspection.  A certificate is valuable
not only when it supports automatic return, but also when it exposes that model
complexity, training sensitivity, or limited positive evidence makes return
unsupported.  Review is therefore part of the policy rather than a failure to
produce a result.  The low pass-gated coverage summarizes how much workload the
available evidence supports after failed policies contribute zero; it is not an
estimate of the capacity of a future, better-designed policy.

\subsection{Interpretation of scores, incidents, and certificates}

The fitted sigmoid outputs are class-balanced ranking scores.  Their monotone
incident ordering is useful, but their numeric values are not calibrated
probabilities for the study population.  This distinction is important because
a visually plausible risk plot could otherwise be read as probability
calibration.  The held-out exact bounds, not the score scale, determine the
marginal certificate.

The incident names also bound the physical interpretation.  Evidence loss means
that the fixed detector recovers fewer annotated defect pixels than the stated
floor; it does not prove that a transformation erased a physical defect.
Excess activation means that detector calls occupy too much annotated clean
area; it does not prove that new physical structure was hallucinated.  Both
surrogates conflate image transformation behavior with detector competence.
KolektorSDD makes that dependence explicit: when detector Dice is near 0.04, a
restoration-policy certificate cannot substitute for a competent downstream
measurement system.

\Cref{prop:false-certification} controls a repeated-sampling false-certification
event for one fixed policy under the working model.  It does not guarantee every
returned image, provide a posterior probability of safety, or certify a subgroup,
and it does not support selecting seed 211 because that seed passed.  The
pass-count analysis is diagnostic, subject to the policy-family boundary of
\Cref{sec:guarantee}.

The observed pass/fail pattern is not specific to the reported
Clopper--Pearson/Bonferroni construction.  The certificates were recomputed from
the same counts under a \v{S}id\'ak allocation, under no allocation at all, and
under a Wilson score limit (Supplementary \Cref{tab:supp-bound-sensitivity}).  Across these examined
alternatives, the pass/fail pattern is unchanged.  Removing the multiplicity
allocation entirely moves the largest bound by about one percentage point,
whereas the failing bounds exceed their target by 0.012--0.018, and two
repetitions fail earlier still, at threshold tuning, where no bound is involved.
These checks show that the reported failures are not rescued by removing the
allocation or by the Wilson comparison; they do not establish invariance to
every valid interval procedure.  In this comparison, the margins are governed
mainly by the observed incident counts and endpoint evidence.

\subsection{Population mixture and evidence requirements}

Separating accepted positives from all accepted images prevents clean prevalence
from diluting evidence-loss risk.  Excess activation remains mixture-dependent,
which is why \Cref{eq:standardized-activation} reports component risks and an
operating-prevalence standardization.  This issue is consequential here: the
nominal test set has 430 positives but only 16 clean images, and calibration has
23 clean images.  Clean coverage and clean-component activation estimates are
therefore imprecise and morphology-linked.  Exact bounds make the requirement
visible: with a 15\% target and endpoint error allocation 0.05, even zero
incidents require at least 19 accepted units, and one incident requires 30
(Supplementary \Cref{tab:supp-sample-size}).  KolektorSDD's 13 certification positives cannot meet
the target for that reason alone.  Positive and clean counts, not only total
image count, must therefore drive prospective sample-size planning.

\subsection{Operational implications}

A prospective study should prespecify the physical sampling unit, endpoints,
strata, and review costs.  Because the public metadata do not expose wafer or lot
groupings, the present image-level i.i.d.\ model cannot address within-cluster
dependence; new data should reserve entire known clusters for certification.
Detector-relative recall is not physical preservation, so that stronger endpoint
would require paired acquisition or independent metrology.  Review should be
measured rather than treated as costless abstention, consistent with
learning-to-defer formulations \cite{mozannar2020defer,narasimhan2022defer} and
the broader SEM-inspection literature \cite{sem_review2023}.  These are design
requirements for a follow-up, not claims about current deployment.

The present policy generates every candidate and detector map before selection,
so it makes no latency or energy claim.  A cascade would be a different policy
requiring new threshold selection and certification.

\subsection{Limitations and prospective study design}

The evidence is retrospective: the Carinthia-S identity bank was previously
studied in \cite{yang2026waferinspectsr} (\Cref{tab:prior-comparison}).  New data
roles and policy records separate the estimand, but they cannot recreate
pristine external confirmation.  Physical independence is unknown because public
metadata do not expose wafer, lot, specimen, or field-of-view grouping.
Treating images as i.i.d.\ may be anti-conservative if correlated images cross
data roles.

The blur--downsample--noise schedule provides exact correspondence but is not a
named SEM degradation process, and physical scale is unavailable.  The study
therefore cannot establish behavior under real rescanning, charging, drift,
contamination, or recipe change.  Reserved Carinthia-S morphologies show
catastrophic transfer, while KolektorSDD is underpowered and has an inadequate
detector.  The 15\% risk limits and 25\% recall floor are research choices, not
manufacturing requirements, and the evaluation was not preregistered.

Finally, two compact architecture families and a six-feature logistic ranker do
not establish method generality.  Current restoration systems, alternative
detectors, calibrated or nonparametric rankers, and more efficient risk-control
procedures may improve the coverage--risk tradeoff.  The immediate prospective
baseline should nevertheless be the simplest supported policy---fixed bicubic or
an interpolation-only pool---rather than the primary all-action policy, followed by a fresh,
cluster-aware certificate on newly acquired data.

\section{Conclusion}
\label{sec:conclusion}

\method treats restoration as a selective image action whose population-level
incident rates must be supported before automatic return.  Action-specific
scores rank the raw display and five restored candidates; a disjoint sample then
provides exact marginal bounds for evidence loss among accepted positives and
excess activation among all accepted images.  The resulting statement is
transparent but narrow: it applies to one policy fixed in advance, one nominal
population, and an image-level i.i.d.\ working model.

On the retrospective Carinthia-S study, the protocol yields auditable
risk--coverage behavior.  The primary all-action policy passes in one of five
repetitions and reaches
\(12.0\%\pm26.9\%\) pass-gated coverage.  Fixed bicubic, an interpolation-only
pool, and a reduced-feature policy each pass in three of five repetitions and
achieve higher mean pass-gated coverage.  Reserved morphologies produce
81.1--90.3\% evidence-loss incidence, and KolektorSDD cannot support the target
because both detector competence and positive sample size are inadequate.  The
evidence therefore supports certification and review as safeguards, but not
adaptive-policy superiority or cross-domain readiness
(\Cref{tab:primary,tab:key-ablations} and \Cref{fig:boundary-evidence}).

The next study should fix a simple policy before certification, reserve newly
acquired wafer- or lot-level clusters, elicit incident limits from manufacturing
costs, collect adequate positive and clean populations, and measure human-review
performance.  Until such evidence exists, restoration should remain an explicit,
selectable, and rejectable transformation rather than hidden mandatory
preprocessing.

\section*{Data and Code Availability}
Carinthia-S and KolektorSDD are public datasets distributed by their original
authors under their stated licenses.  Analysis code is available at
\url{https://github.com/nbbllxx0/SafeRestore}.  Deterministic split manifests,
image--action records, certification summaries, tests, and the source data
and scripts used for every reported figure will be released in that repository.
No model, operating point, numerical result, or synthetic record from
\cite{yang2026waferinspectsr} enters fitting, threshold selection, or
certification here; Carinthia-S identity overlap with that study is disclosed
in \Cref{tab:prior-comparison}.
Controlled degradations are counterfactual restoration tests; no proprietary
manufacturing imagery is used.

\section*{Declaration of Generative AI Assistance}
The Cursor editor assisted with code drafting and language editing.  The
authors designed the study, selected and verified the data, ran and checked the
experiments, reviewed the manuscript, and take responsibility for its content.

\bibliographystyle{unsrt}
\begingroup
\fontsize{7.6pt}{8.2pt}\selectfont
\bibliography{references}
\endgroup

\clearpage
\setcounter{section}{0}
\setcounter{table}{0}
\setcounter{figure}{0}
\renewcommand{\thesection}{S\arabic{section}}
\renewcommand{\thetable}{S\arabic{table}}
\renewcommand{\thefigure}{S\arabic{figure}}
\renewcommand{\theHsection}{supp.\arabic{section}}
\renewcommand{\theHtable}{supp.\arabic{table}}
\renewcommand{\theHfigure}{supp.\arabic{figure}}
\pdfbookmark[0]{Supplementary Material}{supplementary-material}
\section*{Supplementary Material}

\section{Analysis Scope and Statistical Interpretation}

The analysis treats raw display and five restoration candidates as one action
set.  Evidence loss is defined only on defect-positive images and counted among
accepted positives.  Excess activation is defined from clean-region detector
calls and counted among all accepted images.  Threshold-tuning and certification
roles are separated by a deterministic, seed-independent image-identifier rule.
A failed marginal certificate leaves the conditional gate coverage available for
diagnosis but gives zero pass-gated coverage.

The five training repetitions reuse image identities, observation assignments,
and tune/certify membership.  They quantify sensitivity to model fitting; they
are not five independent samples from a factory.  Each seed--policy pair has a
separate marginal certificate.  Pass counts do not justify choosing a passing
seed without multiplicity control or a fresh certification sample.

The Carinthia-S study is retrospective.  All 4,591 identities also appeared in
\cite{yang2026waferinspectsr} (\Cref{tab:prior-comparison}),
although no checkpoint, operating point, split, or numeric result
from that work is used here.  The analysis was not externally preregistered.
These constraints do not change the mechanical split-sample calculation, but
they limit the strength of empirical confirmation.

\section{Data Inventory and Split Verification}

\begin{table}[!htbp]
\centering
\caption{Carinthia-S inventory.  One image is the statistical unit; physical
wafer, lot, and specimen groups are unavailable.}
\label{tab:supp-carinthia}
\begin{tabular}{@{}lrrr@{}}
\toprule
Partition & Images & Positive & Clean \\
\midrule
Training & 3,186 & 3,014 & 172 \\
Detector validation & 431 & 416 & 15 \\
Calibration & 461 & 438 & 23 \\
Nominal test & 446 & 430 & 16 \\
Reserved morphology & 67 & 67 & 0 \\
\midrule
Total & 4,591 & 4,365 & 226 \\
\bottomrule
\end{tabular}
\end{table}

\Cref{tab:supp-carinthia} verifies the image inventory.  Classes 3, 4, and 6
define the nominal population; classes 1, 2, and 5 remain
intact as the reserved morphology diagnostic.  Positivity is determined by a
nonempty binary mask after nearest-neighbor mask resizing.  Dataset files are
read-only during analysis, and the archive records inventory and split digests
\cite{carinthias2025}.

\begin{table}[!htbp]
\centering
\caption{KolektorSDD item-level diagnostic split.  It is not the official fold
and cannot support the stated certificate because the tuning and certification
positive counts are too small.}
\label{tab:supp-kolektor}
\begin{tabular}{@{}lrrr@{}}
\toprule
Partition & Items & Images & Positive \\
\midrule
Training & 14 & 111 & 15 \\
Detector validation & 6 & 48 & 6 \\
Calibration & 17 & 136 & 18 \\
Test & 13 & 104 & 13 \\
\midrule
Total & 50 & 399 & 52 \\
\bottomrule
\end{tabular}
\end{table}

\Cref{tab:supp-kolektor} records the item-level split used only for the
second-dataset boundary diagnosis.

\section{Implementation Details}

Images are converted to grayscale and resized to \(256\times256\) with bilinear
interpolation; masks use nearest-neighbor interpolation followed by binary
thresholding.  Training augmentation consists only of independent random
horizontal and vertical flips.  No intensity jitter, crop augmentation, or
learning-rate scheduler is used.  Data loading uses batch size 8 and deterministic
random seeds; PyTorch deterministic cuDNN mode is enabled and benchmarking is
disabled.

The primary compact three-level U-Net detector has three encoder resolutions
with base width 16, group normalization, SiLU activations, 0.05 dropout,
transposed-convolution upsampling, and 117,393 trainable parameters.  It is
trained for five epochs with AdamW (learning rate \(2\times10^{-3}\), weight
decay \(10^{-4}\)).  The loss is weighted binary cross-entropy plus soft Dice;
the batch-specific positive weight is the clean-to-positive pixel ratio clipped
to \([1,80]\).  The checkpoint with highest mean detector-validation Dice is
retained.  The detector threshold is the higher empirical \((1-10^{-3})\)
quantile of detector scores on annotated clean pixels in the detector-validation
role.

The primary six-block residual restorer has a 32-channel input convolution, six
residual blocks, and an output residual added to bicubic interpolation (111,585
trainable parameters).  It is trained for five epochs with AdamW (learning rate
\(10^{-3}\), weight decay \(10^{-5}\)).  The objective is image \(\ell_1\) loss
plus 0.25 times the sum of horizontal and vertical gradient \(\ell_1\) losses.
The final epoch is used; no validation-based restorer checkpoint selection is
performed.  The alternate family uses an attention-gated U-Net detector and a
one-level U-Net residual restorer with otherwise matched training roles.

For each action and endpoint, a StandardScaler is fitted on the applicable
detector-validation population.  LogisticRegression uses the liblinear solver,
class weights ``balanced,'' \(C=1\), at most 2,000 iterations, and the training
seed as its random state.  Evidence-loss models use positive images only;
activation models use all images.  If an endpoint population contains only one
class, its fitted score is the observed constant label.  Candidate ties use the
order raw, bilinear, bicubic, smoothed bicubic, sharpened bicubic, learned.

Threshold tuning requires at least 15 accepted images and 15 accepted positives.
The largest ordered-score prefix whose two empirical incident rates do not
exceed half their 0.15 targets fixes the gate.  Certification uses one-sided
Clopper--Pearson bounds with endpoint error allocation 0.05
\cite{clopper1934}.  Zero accepted units give upper bound one.

The archived experiment summaries record Python 3.12.13, PyTorch
2.11.0+cu128, torchvision 0.26.0+cu128, NumPy 2.4.3, SciPy 1.17.1,
scikit-learn 1.8.0, Matplotlib 3.10.9, Pillow 12.1.1, and an NVIDIA RTX~5090.
Risk-score fitting, exact bounds, tables, and figures are CPU-capable; detector
and restorer training used the GPU.  All candidates are generated before action
selection, so runtime is not an efficiency result.

\section{Architecture-Family Sensitivity}

\begin{table}[!htbp]
\centering
\caption{Seed-level marginal certificates for the alternate family at the 25\% recall floor.  Outcomes do not authorize post-certification seed selection.}
\label{tab:supp-alternate-family}
\scriptsize
\setlength{\tabcolsep}{2.4pt}
\begin{tabular}{@{}rccccccc@{}}
\toprule
Seed & Cert. loss & \(\ucbLoss\) & Cert. act. & \(\ucbAct\) & Test loss & Test act. & Test C / result \\
\midrule
201 & 0/34 & 0.084 & 2/42 & 0.142 & 0/63 & 1/73 & 16.4\% / passes \\
203 & 18/148 & 0.175 & 18/161 & 0.161 & 23/228 & 24/240 & 53.8\% / fails both \\
207 & 4/105 & 0.085 & 10/113 & 0.145 & 0/169 & 14/179 & 40.1\% / passes \\
211 & 20/165 & 0.171 & 16/173 & 0.137 & 24/265 & 12/275 & 61.7\% / fails loss \\
223 & 1/96 & 0.048 & 8/109 & 0.129 & 4/157 & 7/169 & 37.9\% / passes \\
\bottomrule
\end{tabular}
\end{table}

\Cref{tab:supp-alternate-family} shows 3/5 passes and
\(18.9\%\pm19.6\%\) pass-gated coverage.  Across passing seeds 201, 207, and
223, the unweighted means of the held-out incident proportions are 0.8\%
evidence loss and 4.4\% excess activation; denominators are not pooled.  The differing pattern indicates
architecture sensitivity, not family-independent validity or a basis for
post-certification selection.
\section{Exploratory Tuning-Rule Sensitivity}

The half-margin criterion described in the main article is the only tuning rule
specified in the frozen analysis protocol.  After the primary results were
available, we recomputed thresholds with two alternatives: empirical incident
rates at the target itself and exact upper bounds on the tuning role.  The same
certification sample then evaluates every resulting gate
(\Cref{tab:supp-tuning-sensitivity}).

\begin{table}[!htbp]
\centering
\caption{Post-hoc tuning-rule sensitivity for the all-action policy.  Test
coverage is conditional on the reported gate and is shown even when the
certificate fails.  This shared-sample comparison is descriptive; it cannot be
used to select a rule and retain the primary confirmatory interpretation.}
\label{tab:supp-tuning-sensitivity}
\footnotesize
\renewcommand{\arraystretch}{1.05}
\begin{tabular}{@{}lrrrrlr@{}}
\toprule
Rule & Seed & \(\widehat\theta\) & \(\ucbLoss\) & \(\ucbAct\) & Decision & Test C (\%) \\
\midrule
Half-margin & 201 & -- & 1.000 & 1.000 & no gate & 0.0 \\
 & 203 & -- & 1.000 & 1.000 & no gate & 0.0 \\
 & 207 & 3.287 & 0.167 & 0.162 & fail & 63.2 \\
 & 211 & 3.530 & 0.135 & 0.113 & pass & 60.1 \\
 & 223 & 2.854 & 0.038 & 0.168 & fail & 48.7 \\
\addlinespace[2pt]
Empirical target & 201 & 2.300 & 0.054 & 0.259 & fail & 28.0 \\
 & 203 & 2.314 & 0.080 & 0.252 & fail & 17.5 \\
 & 207 & 4.638 & 0.129 & 0.284 & fail & 78.3 \\
 & 211 & 6.667 & 0.207 & 0.235 & fail & 100.0 \\
 & 223 & 3.035 & 0.053 & 0.213 & fail & 55.2 \\
\addlinespace[2pt]
Tuning UCB & 201 & -- & 1.000 & 1.000 & no gate & 0.0 \\
 & 203 & -- & 1.000 & 1.000 & no gate & 0.0 \\
 & 207 & 3.287 & 0.167 & 0.162 & fail & 63.2 \\
 & 211 & 3.530 & 0.135 & 0.113 & pass & 60.1 \\
 & 223 & 2.703 & 0.027 & 0.106 & pass & 45.3 \\
\bottomrule
\end{tabular}
\end{table}

The tuning-UCB variant yields two passing marginal certificates in this
retrospective comparison, but that observation occurred after the rule family
and certification outcomes were examined.  It therefore does not nominate the
tuning-UCB rule, seed 223, or any other row for deployment.  A future comparison
must prespecify one rule, apply familywise error control, or evaluate a selected
rule on fresh certification data.

\section{Complete Policy Comparisons}

\Cref{tab:supp-primary-all,tab:supp-strict-all} report the complete fixed-action
and adaptive-pool comparisons at both recall floors.

\begin{table}[!htbp]
\centering
\caption{Primary 25\% recall-floor results.  Pass-gated coverage assigns zero to
a failed marginal certificate.}
\label{tab:supp-primary-all}
\small
\begin{tabular}{@{}lrrrr@{}}
\toprule
Policy & Pass/5 & Conditional C (\%) & Pass-gated C (\%) & SD \\
\midrule
Raw & 2 & 41.3 & 16.7 & 22.9 \\
Bilinear & 2 & 51.7 & 17.4 & 23.9 \\
Bicubic & 3 & 46.7 & 23.3 & 22.2 \\
Smoothed bicubic & 2 & 51.4 & 17.5 & 24.1 \\
Sharpened bicubic & 1 & 44.7 & 7.9 & 17.7 \\
Learned & 0 & 12.3 & 0.0 & 0.0 \\
Restoration-only pool & 0 & 35.5 & 0.0 & 0.0 \\
All-action policy & 1 & 34.4 & 12.0 & 26.9 \\
\bottomrule
\end{tabular}
\end{table}

\begin{table}[!htbp]
\centering
\caption{Policy results after refitting at the stricter 50\% recall floor.}
\label{tab:supp-strict-all}
\small
\begin{tabular}{@{}lrrr@{}}
\toprule
Policy & Pass/5 & Conditional C (\%) & Pass-gated C (\%) \\
\midrule
Raw & 0 & 28.4 & 0.0 \\
Bilinear & 2 & 28.4 & 13.3 \\
Bicubic & 2 & 27.5 & 12.1 \\
Smoothed bicubic & 3 & 28.2 & 20.3 \\
Sharpened bicubic & 1 & 27.7 & 6.7 \\
Learned & 0 & 10.2 & 0.0 \\
Restoration-only pool & 1 & 17.9 & 4.0 \\
All-action policy & 1 & 18.0 & 4.0 \\
\bottomrule
\end{tabular}
\end{table}

\begin{table}[!htbp]
\centering
\caption{Action counts below the primary all-action gate on the nominal test
role.  Seeds 201 and 203 have no eligible gate; only seed 211 passes its marginal
certificate.}
\label{tab:supp-action-mix}
\small
\begin{tabular}{@{}rrrrrrrrr@{}}
\toprule
Seed & Pass & Returned & Raw & Bil. & Bic. & Smooth & Sharp & Learned \\
\midrule
201 & no & 0 & -- & -- & -- & -- & -- & -- \\
203 & no & 0 & -- & -- & -- & -- & -- & -- \\
207 & no & 282 & 25 & 28 & 29 & 59 & 41 & 100 \\
211 & yes & 268 & 20 & 6 & 31 & 42 & 93 & 76 \\
223 & no & 217 & 11 & 12 & 18 & 9 & 147 & 20 \\
\bottomrule
\end{tabular}
\end{table}

\begin{table}[!htbp]
\centering
\caption{Seed-211 component and prevalence-standardized coverage for the
primary all-action policy.  Standardized coverage combines positive and clean
component coverage at the stated operating prevalence \(\pi\).}
\label{tab:supp-component-coverage}
\small
\begin{tabular}{@{}lcc@{}}
\toprule
Quantity & Certification & Test \\
\midrule
Positive coverage & 158/310 = 0.510 & 256/430 = 0.595 \\
Clean coverage & 11/19 = 0.579 & 12/16 = 0.750 \\
Standardized, \(\pi=0.01\) & 0.578 & 0.748 \\
Standardized, \(\pi=0.10\) & 0.572 & 0.735 \\
Standardized, \(\pi=0.50\) & 0.544 & 0.673 \\
\bottomrule
\end{tabular}
\end{table}

\Cref{tab:supp-action-mix,tab:supp-component-coverage} show that seed 211 accepts
158/310 certification positives and 11/19 clean images.  Only 16 test clean
images are available, so the clean component should not be treated as precisely
estimated.

\begin{table}[!htbp]
\centering
\caption{Primary-family supporting baselines.}
\label{tab:supp-baselines}
\small
\begin{tabular}{@{}lrr@{}}
\toprule
Baseline & Pass/5 & Pass-gated C (\%) \\
\midrule
All-action policy & 1 & \(12.0\pm26.9\) \\
Entropy-gated raw & 0 & \(0.0\pm0.0\) \\
Outcome-informed action oracle with accept-all gate & 0 & \(0.0\pm0.0\) \\
\bottomrule
\end{tabular}
\end{table}

\begin{table}[!htbp]
\centering
\caption{Seed-level certification evidence for supporting baselines.  Entropy
gating finds no eligible tuning threshold.  The outcome-informed oracle accepts
all certification images after choosing actions with observed outcomes.}
\label{tab:supp-baseline-certificates}
\footnotesize
\begin{tabular}{@{}rcccc@{}}
\toprule
Seed & Entropy gate & Oracle loss \(e/n\;(U)\) & Oracle activation \(e/n\;(U)\) & Oracle result \\
\midrule
201 & none & 95/310 (0.352) & 4/329 (0.028) & fails loss \\
203 & none & 151/310 (0.535) & 4/329 (0.028) & fails loss \\
207 & none & 88/310 (0.329) & 7/329 (0.040) & fails loss \\
211 & none & 78/310 (0.295) & 19/329 (0.084) & fails loss \\
223 & none & 130/310 (0.467) & 9/329 (0.047) & fails loss \\
\bottomrule
\end{tabular}
\end{table}

The oracle in \Cref{tab:supp-baselines,tab:supp-baseline-certificates} chooses the action with the smallest
observed dual-incident indicator
for each image and then evaluates an accept-all gate.  It is not an
application-time comparator because it uses outcomes.  Even this construction has
\(\ucbLoss>0.15\) in every seed (0.295 for seed 211), showing that action choice
alone cannot repair an uncertified return region.

\section{Feature and Action-Pool Ablations}

\begin{table}[!htbp]
\centering
\caption{All feature and action-pool ablations.  Each variant is tuned and
certified as a separate marginal policy on the same five identities and
seeds.}
\label{tab:supp-ablations}
\small
\begin{tabular}{@{}llrrr@{}}
\toprule
Type & Variant & Pass/5 & Mean pass-gated C (\%) & SD \\
\midrule
Feature & full & 1 & 12.0 & 26.9 \\
Feature & no consistency & 3 & 24.6 & 24.9 \\
Feature & no disagreement & 2 & 20.5 & 28.2 \\
Feature & no entropy & 2 & 16.5 & 23.0 \\
Feature & no evidence shift & 2 & 20.5 & 28.4 \\
Feature & no probability area & 0 & 0.0 & 0.0 \\
Feature & no residual & 2 & 10.9 & 16.4 \\
Feature & uncertainty only & 2 & 16.5 & 28.1 \\
Pool & all actions & 1 & 12.0 & 26.9 \\
Pool & interpolation only & 3 & 25.2 & 23.6 \\
Pool & learned only & 0 & 0.0 & 0.0 \\
Pool & no learned & 2 & 18.3 & 25.4 \\
Pool & no raw & 0 & 0.0 & 0.0 \\
Pool & no sharpening & 2 & 21.0 & 29.3 \\
Pool & no smoothing & 1 & 12.4 & 27.8 \\
\bottomrule
\end{tabular}
\end{table}

\begin{table}[!htbp]
\centering
\caption{Seed-level pass pattern for the strongest simplifications.  The
patterns show why aggregate pass counts must not be interpreted as five
independent replications.}
\label{tab:supp-key-ablation-seeds}
\small
\begin{tabular}{@{}lrrrrrr@{}}
\toprule
Variant & 201 & 203 & 207 & 211 & 223 & Pass/5 \\
\midrule
Full / all actions & 0 & 0 & 0 & 1 & 0 & 1 \\
No consistency & 0 & 1 & 0 & 1 & 1 & 3 \\
Interpolation only & 1 & 0 & 0 & 1 & 1 & 3 \\
No learned action & 1 & 0 & 0 & 0 & 1 & 2 \\
No raw action & 0 & 0 & 0 & 0 & 0 & 0 \\
\bottomrule
\end{tabular}
\end{table}

The variants in \Cref{tab:supp-ablations,tab:supp-key-ablation-seeds} were
examined on the same image bank and therefore motivate a
future policy; they do not license selecting the best variant from this table.
In particular, the consistency feature and learned candidate add complexity
without improving the observed certificate pattern, while availability of the
raw action is essential in this design.

\section{Evidence Volume and Population Boundaries}

\begin{table}[!htbp]
\centering
\caption{Marginal all-action certificates when deterministic subsets of the
fixed certification role are retained.  This is a descriptive evidence-volume
sensitivity, not a new random-sampling experiment.}
\label{tab:supp-cert-size}
\begin{tabular}{@{}rrrr@{}}
\toprule
Retained (\%) & Pass/5 & \(\nLoss\) range & \(\nAct\) range \\
\midrule
25 & 0 & 0--42 & 0--46 \\
50 & 0 & 0--79 & 0--85 \\
75 & 1 & 0--123 & 0--134 \\
100 & 1 & 0--169 & 0--182 \\
\bottomrule
\end{tabular}
\end{table}

\Cref{tab:supp-cert-size} isolates the effect of reducing the certification
role while leaving each previously tuned gate fixed.

\begin{table}[!htbp]
\centering
\caption{Nominal all-action gates applied unchanged to reserved morphologies.
No row has a transferred certificate.}
\label{tab:supp-shift}
\begin{tabular}{@{}rrrr@{}}
\toprule
Seed & Coverage (\%) & Evidence loss (\%) & Excess activation (\%) \\
\midrule
207 & 55.2 & 81.1 & 0.0 \\
211 & 46.3 & 90.3 & 0.0 \\
223 & 32.8 & 81.8 & 4.5 \\
\bottomrule
\end{tabular}
\end{table}

\Cref{tab:supp-shift} shows a direct failure of transport: the nominal
gates still return many images, but evidence-loss incidence is above 80\% for all
three defined gates.  Coverage alone would therefore give a misleading picture
of transfer.

\section{Clean-Proxy and KolektorSDD Diagnostics}

The study contains few nominal clean images and they are linked to class 6.  A
post-hoc sensitivity was therefore run on 80 inpainted clean twins of positive
test images, chosen as the first 80 positives in deterministic manifest order.
The originally designated proxy seed (201) had no eligible gate.  The analysis
was then repeated descriptively for seed 211 because it was the sole passing
primary policy; this choice occurred after the certificate outcomes and carries
no confirmatory claim.  Under that gate, 38/80 proxies are returned (47.5\%), and
10/38 returned proxies have excess activation (26.3\%).  The one-sided 95\%
upper bounds are 0.573 for return frequency and 0.405 for activation frequency.
Inpainting does not create matched physical clean acquisitions, and the selected
80 identities are not a random or morphology-balanced sample.

KolektorSDD fails before an informative restoration-policy comparison is
possible.  The powered, non-official item-level diagnostic split has five
tuning positives and 13 certification positives.  These results identify
sample-size and detector-competence boundaries; they are not evidence for
external-domain performance.

\begin{table}[!htbp]
\centering
\caption{Clean-proxy and powered KolektorSDD diagnostic evidence.  The proxy is
post hoc; the Kolektor split is a larger non-official item-level diagnostic split,
not official fold~0.}
\label{tab:supp-domain-diagnostics}
\small
\begin{tabular}{@{}lcc@{}}
\toprule
Diagnostic & Count or range & Rate / one-sided 95\% upper bound \\
\midrule
Clean-proxy return, seed 211 & 38/80 & 47.5\% / 0.573 \\
Clean-proxy excess activation & 10/38 & 26.3\% / 0.405 \\
Kolektor tuning positives & 5 & below minimum 15 \\
Kolektor certification positives & 13 & \(U(0,13;0.05)=0.206\) \\
Kolektor detector-validation Dice & five seeds & 0.033--0.042 \\
\bottomrule
\end{tabular}
\end{table}

\Cref{tab:supp-domain-diagnostics} collects the governing counts and bounds.

\section{Sensitivity to the Bound Construction}

The reported certificates combine an exact one-sided Clopper--Pearson bound with
a Bonferroni allocation of \(\delta=0.10\) across the two endpoints.  Both
choices are conservative, so it is fair to ask how much of the observed failure
pattern they cause.  \Cref{tab:supp-bound-sensitivity} recomputes the all-action
certificates from the same certification counts under three alternatives: a
\v{S}id\'ak allocation \(\gamma=1-(1-\delta)^{1/2}=0.0513\); no multiplicity
allocation at all, that is, \(\gamma=\delta=0.10\) for each endpoint separately,
which is an upper limit on what any sharper allocation could recover; and a
Wilson score upper limit at \(\gamma=0.05\), which drops exactness in exchange
for a shorter interval.

\begin{table}[!htbp]
\centering
\caption{All-action certificates recomputed from the same counts under
alternative bound constructions.  Seeds 201 and 203 have no eligible gate and
are omitted.  Only the reported construction is confirmatory; the alternatives
are shown to locate the cause of failure.}
\label{tab:supp-bound-sensitivity}
\footnotesize
\begin{tabular}{@{}lrrrl@{}}
\toprule
Construction & Seed & \(\ucbLoss\) & \(\ucbAct\) & Decision \\
\midrule
Clopper--Pearson, Bonferroni \(\gamma=0.05\) & 207 & 0.167 & 0.162 & fails both \\
(reported) & 211 & 0.135 & 0.113 & passes \\
 & 223 & 0.038 & 0.168 & fails activation \\
\addlinespace[2pt]
Clopper--Pearson, \v{S}id\'ak \(\gamma=0.0513\) & 207 & 0.167 & 0.162 & fails both \\
 & 211 & 0.135 & 0.112 & passes \\
 & 223 & 0.038 & 0.168 & fails activation \\
\addlinespace[2pt]
Clopper--Pearson, no allocation \(\gamma=0.10\) & 207 & 0.157 & 0.152 & fails both \\
 & 211 & 0.125 & 0.103 & passes \\
 & 223 & 0.031 & 0.156 & fails activation \\
\addlinespace[2pt]
Wilson score limit, \(\gamma=0.05\) & 207 & 0.165 & 0.160 & fails both \\
 & 211 & 0.133 & 0.111 & passes \\
 & 223 & 0.036 & 0.166 & fails activation \\
\bottomrule
\end{tabular}
\end{table}

The pass/fail pattern is identical under the four examined constructions.
Removing the multiplicity allocation entirely moves the largest bound by
roughly one percentage point, and replacing the exact bound with a Wilson limit
moves it by less; neither is enough to rescue seed 207 or seed 223, whose failing
bounds exceed the target by 0.012--0.018.  Two seeds fail earlier still, at
threshold tuning, where no bound is involved.  These checks show that the
reported failures are not rescued by removing the allocation or by the Wilson
comparison; other valid interval procedures would require separate analysis.

\section{Endpoint Evidence Required for a Certificate}

Because an exact upper bound depends on the accepted count and not on the total
study size, the certification role must supply enough accepted units for
\emph{each} endpoint population.  \Cref{tab:supp-sample-size} gives the smallest
accepted count \(n\) for which \(U(e,n;0.05)\le\alpha\), which is the planning
quantity a prospective study needs once its target and expected incident count
are fixed.

\begin{table}[!htbp]
\centering
\caption{Smallest accepted count \(n\) satisfying \(U(e,n;0.05)\le\alpha\) for
\(e\) observed incidents, at the per-endpoint level used here
(\(\gamma=\delta/2=0.05\)).  A target cannot be met at any sample size once
\(e/n\) approaches \(\alpha\), so the table is a floor, not a schedule.}
\label{tab:supp-sample-size}
\small
\begin{tabular}{@{}lrrrr@{}}
\toprule
Target \(\alpha\) & \(e=0\) & \(e=1\) & \(e=2\) & \(e=3\) \\
\midrule
0.05 & 59 & 93 & 124 & 153 \\
0.10 & 29 & 46 & 61 & 76 \\
0.15 & 19 & 30 & 40 & 50 \\
0.20 & 14 & 22 & 30 & 37 \\
0.25 & 11 & 18 & 23 & 29 \\
\bottomrule
\end{tabular}
\end{table}

Two consequences shape the present study.  The evidence-loss and all-accepted
activation endpoints are comfortably powered: the eligible all-action gates
accept 123--169 positives and 133--182 images in total.  The clean-only
activation component is not: those gates accept 10--13 clean images against a
requirement of 19 even with zero incidents, which is why that component is
reported descriptively rather than certified.  KolektorSDD fails the same test
with 13 certification positives.  Prospective designs should therefore size the
positive and clean populations separately from the overall image count.

\section{Released Record Schema}

The archived records predate the terminology used in this article.  Field names
map to the article as follows: \texttt{deletion\_*} fields carry the
evidence-loss endpoint (\texttt{deletion\_risk}, \texttt{deletion\_errors},
\texttt{deletion\_denominator}, \texttt{deletion\_ucb} correspond to
\(\lossrisk\), \(\eLoss\), \(\nLoss\), \(\ucbLoss\)), and \texttt{invention\_*}
fields carry the excess-activation endpoint (\texttt{invention\_risk},
\texttt{invention\_errors}, \texttt{invention\_denominator},
\texttt{invention\_ucb} correspond to \(\actrisk\), \(\eAct\), \(\nAct\),
\(\ucbAct\)).  Additionally, \texttt{coverage} is conditional coverage before
the certificate decision, \texttt{deployable\_coverage} is pass-gated coverage,
and \texttt{action\_pool} is the all-action policy.  The record names are
retained for archive stability; the article's terminology is the one that
carries the intended meaning, and the words \emph{deletion} and
\emph{invention} are deliberately not used as claims in the text.

\section{Reproducibility Resources}

Analysis code is available at
\url{https://github.com/nbbllxx0/SafeRestore}.  Split manifests, per-image and
per-action records, trained-model summaries, exact certificate counts,
figure/table sources, and regression tests will be released in that repository.
Those records are sufficient to reconstruct the inventory and splits, refit
score models, evaluate fixed gates, and regenerate all aggregates and figures
without retraining the detector or restorer.

\end{document}